\theoremstyle{plain}
\newtheorem{theorem}{Theorem}[section]
\newtheorem{lemma}{Lemma}[section]
\newtheorem{corollary}{Corollary}[section]
\theoremstyle{definition}
\newtheorem{definition}{Definition}
\newtheorem{assumption}{Assumption}[section]
\theoremstyle{remark}
\newtheorem{remark}{Remark}[section]
\newcolumntype{Y}{>{\centering\arraybackslash}X}
\title{Synergy over Discrepancy: A Partition-Based Approach to Multi-Domain LLM Fine-Tuning}
\author{
  Hua Ye\textsuperscript{1,2},
  Siyuan Chen\textsuperscript{3},
  Haoliang Zhang\textsuperscript{4},
  Weihao Luo\textsuperscript{5},
  Yanbin Li\textsuperscript{6},
  Xuan Zhang\textsuperscript{2,7\textdagger}
  \\
  \small
  \textsuperscript{1}Nanjing University \quad
  \textsuperscript{2}Airon Technology CO., LTD \quad
  \textsuperscript{3}University of Bristol \\
  \small
  \textsuperscript{4}The University of Oklahoma \quad
  \textsuperscript{5}Donghua University \\
  \small
  \textsuperscript{6}Beijing University of Posts and Telecommunications \quad
  \textsuperscript{7}Carnegie Mellon University
}
\let\oldmaketitle\maketitle
\renewcommand{\maketitle}{%
  \oldmaketitle%
  \thispagestyle{plain}%
  \begingroup
  \renewcommand{\thefootnote}{}%
  \footnotetext{\textdagger Corresponding author(xuanzhang2199@gmail.com)}%
  \endgroup
}
\begin{document}

\maketitle

\begin{abstract}
  Large language models (LLMs) demonstrate impressive generalization abilities, yet adapting them effectively across multiple heterogeneous domains remains challenging due to inter-domain interference. To overcome this challenge, we propose a partition-based multi-stage fine-tuning framework designed to exploit inter-domain synergies while minimizing negative transfer. Our approach strategically partitions domains into subsets (stages) by balancing domain discrepancy, synergy, and model capacity constraints. We theoretically analyze the proposed framework and derive novel generalization bounds that justify our partitioning strategy. Extensive empirical evaluations on various language understanding tasks show that our method consistently outperforms state-of-the-art baselines. 
\end{abstract}

\section{Introduction}
\label{sec:intro}
Large language models (LLMs) have propelled natural language processing (NLP) to unprecedented capabilities~\citep{kumar2024large, karanikolas2023large,hu-etal-2025-removal,zhang2025enhancing,lin2025cec}, owing largely to their extensive pretraining on massive, diverse textual corpora~\citep{wu2022autoformalization, li2024driving,chen2025framework,zhang2024cf,10.1145/3711896.3737195}. Fine-tuning these pretrained models for a specific downstream domain has been widely explored and proven highly effective; notable approaches include adapter-based modules~\citep{houlsby2019parameter, zhang2024llama}, parameter-efficient fine-tuning via low-rank updates~\citep{hu2021lora}, and instruction-based fine-tuning methods~\citep{cao2024instruction}. These methods have also accelerated the application of LLMs in many scenarios, such as healthcare\citep{tong2025renaissance,jiaqi2025lightweight,wang2025medical}, transportation\citep{yao2023ndc,lu20254d,zeng2025janusvln,zeng2025FSDrive}, robotics\citep{xiao2025diffusion,yan2025hemora,zhang2025yoloppa}, and other applications~\citep{sun2025objective,zhang2025omniguard}.

However, while these methods excel in adapting to a single domain, practical scenarios frequently require simultaneous adaptation to multiple distinct domains-a scenario far less studied and substantially more challenging~\citep{lu2025uniugp,DePro}. 
Due to constraints on energy or computing power~\citep{He2025UnifiedMetric}, we must consider how to enhance the model's ability to adapt to different domains.
Consider, for instance, a scenario where a single pretrained model must be simultaneously adapted to clinical texts~\citep{thirunavukarasu2023large}, social media posts~\citep{yang2024mentallama,10.1145/3664647.3681115,jiang2025transforming}, and legal documents~\citep{seabra2024contrato360}. Naive approaches such as jointly fine-tuning the model across all domains or independently fine-tuning separate models per domain often yield suboptimal results\citep{xiao2025curiosity,zhang2023multi,wang2024computing,wen2023syreanet}: domain-specific features may negatively interfere, causing one domain to overshadow others or impair overall generalization capabilities~\citep{lu2024fine, zheng2024fine, van2023radadapt,tan2025profix}. This motivates our central research question: \emph{how can we effectively and efficiently fine-tune a single LLM across multiple heterogeneous domains, exploiting inter-domain synergies while mitigating negative interference?}


A natural strategy to mitigate these challenges is to add adapter modules specialized for each domain~\citep{houlsby2019parameter,zhang2024llama,tao2023dudb}. While adapters reduce the need for full fine-tuning, they do not fully address the complexity of managing multiple source domains with potentially large discrepancies. In such cases, the learned model has to juggle both shared features (common linguistic properties across domains) and domain-specific features (rare words, styles, or content)~\citep{lu2022understanding}. Existing approaches often handle these competing demands by imposing either domain adversarial objectives~\citep{ganin2015unsupervised}, distribution alignment~\citep{peng2019moment}, or low-rank parameter updates~\citep{hu2021lora}. However, these techniques may still fail to exploit synergistic relationships-where certain domains are complementary and can reinforce each other's accuracy-and do not fully capture how best to ``partition'' domains to avoid negative interference.

Motivated by this gap, we propose a partition-based multi-stage framework for multi-domain LLM fine-tuning. Instead of jointly or separately adapting to all domains, our method clusters synergistic domains while isolating highly distinct ones. Our approach integrates practical constraints-memory budgets, domain shifts, and synergy opportunities-with theoretical insights on generalization benefits from restricted updates and strategic domain grouping. The main contributions are:

1) We introduce a novel partitioning algorithm that clusters domains according to their synergy and discrepancy, then fine-tunes the LLM in multiple stages. This orchestrated process prevents cross-domain contamination while leveraging beneficial interactions.

2) We derive new bounds that capture domain discrepancy, synergy offsets, and adapter complexity, establishing conditions under which multi-stage partitioning yields provably tighter guarantees than single-stage or naive multi-domain methods.

3) Our experiments demonstrate that partition-based multi-stage fine-tuning outperforms state-of-the-art baselines, improving accuracy across different domains and tasks while reducing memory usage.

\section{Related Works}
\label{sec:related}

\subsection{LLM Fine-Tuning}
The success of LLMs such as GPT-3~\citep{brown2020language}, LLaMA~\citep{touvron2023llama}, and Falcon~\citep{almazrouei2023falcon} has underscored the importance of \emph{efficient} fine-tuning. Fully updating model parameters incurs high computational costs~\citep{zhang2024llama,hu2021lora}, prompting parameter-efficient methods that modify only a small subset of parameters. Examples include adapter modules~\citep{houlsby2019parameter}, low-rank projections (LoRA)~\citep{hu2021lora}, and tightly integrated adapters for LLaMA~\citep{zhang2024llama}. However, most methods assume a single domain; adapting LLMs efficiently to multiple domains remains challenging. Another related field is continual learning~\citep{xu2025affordance}, but it assumes a sequential arrival of tasks, which differs from the setting in this paper.

\subsection{Multi-Domain Data Learning}
Real-world data often originate from disparate sources with distinct distributions and vocabularies~\citep{ganin2015unsupervised,sener2018multi,li2023hong,zhang2024m3oe,xiao2024confusion,xiao2025multifrequency,ke2025early}. Traditional multi-domain methods align representations through adversarial training~\citep{pei2018multiadversarial,ganin2015unsupervised}, moment matching~\citep{peng2019moment}, or multi-task objectives~\citep{royer2024scalarization,shen2025aienhanced}, yet typically neglect inter-domain synergies. Adapting LLMs further complicates this via memory overhead, forgetting pretrained knowledge, and cross-domain contamination. Adapter-based solutions partially address these concerns~\citep{houlsby2019parameter} but rarely exploit domain partitioning to maximize synergy. Our partition-based multi-stage approach systematically clusters domains to leverage synergy and minimize discrepancy, providing theoretical guarantees.

\subsection{LLM Data Selection}
The efficacy of supervised fine-tuning (SFT) heavily depends on the quality and composition of training data\citep{yao2024swift}. Recent advances have introduced diverse metrics for data selection: \emph{instance-level} criteria like perplexity~\citep{cao2024instruction}, reward scores~\citep{gou2024mixed}, and loss disparities~\citep{li2024quantity}, 
trajectory-based clustering via small proxy models~\citep{yang2024smalltolarge}, as well as token-level selection methods~\citep{lin2024not}.
Existing works largely overlook one critical factor: \emph{domain interactions}, where diversity metrics operate at instance/token levels~\citep{pang2024improving} without modeling cross-domain compatibility~\citep{li2025ammkd}. Our approach addresses this gap by systematically clustering domains for joint tuning.

\section{Theoretical Analysis}
\label{sec:theoretical-analysis}

\subsection{Preliminaries and Notation}

Let us consider $k$ distinct source domains $\{\mathcal{D}_1,\dots,\mathcal{D}_k\}$, each containing samples $(x,y)$ drawn from some distribution over $\mathcal{X}\times\mathcal{Y}$. We have a large language model $\mathrm{LLM}_{\theta^*}$ pretrained on a massive corpus $\mathcal{D}_{\mathrm{pretrain}}$, and we assume $\theta^*\in\mathbb{R}^p$ lies in a high-dimensional parameter space. Our goal is to \emph{adapt} $\theta^*$ (often minimally) to each domain $\mathcal{D}_j$ by introducing or modifying a small set of parameters (e.g., adapter modules) denoted by $\phi_j \in \mathbb{R}^{q_j}$, where $q_j \ll p$.

\begin{definition}[Multi-Source Fine-Tuned Model]
\label{def:multi-domain-model}
A \textbf{multi-source fine-tuned model} is given by
\begin{equation}
\label{eq:f-def}
    f_{\theta,\{\phi_j\}}(x) \;=\; \mathrm{LLM}_{\theta^*+\Delta\theta}\Bigl(\phi_1,\dots,\phi_k\Bigr)(x),
\end{equation}
where $\Delta\theta \in \mathbb{R}^p$ is a (potentially small) update to the pretrained backbone $\theta^*$. Each $\phi_j$ may represent additional parameters specialized to domain $j$. The model internally selects or combines relevant $\phi_j$ based on domain context or training strategy.
\end{definition}

\paragraph{Loss and Risk.}  
We let $\ell\bigl(f(x),y\bigr)$ be a nonnegative loss function (e.g., cross-entropy) measuring the prediction error on a sample $(x,y)$. For a single domain $\mathcal{D}_j$, the \emph{expected risk} is
\begin{equation}
\mathcal{L}(\theta,\{\phi_j\};\mathcal{D}_j)
\;=\;
\mathbb{E}_{(x,y)\sim \mathcal{D}_j}
\Bigl[\ell\bigl(f_{\theta,\{\phi_i\}}(x),\,y\bigr)\Bigr].
\end{equation}
When training on multiple domains jointly, we typically minimize an aggregated objective:
\begin{equation}
\mathcal{L}_{\mathrm{agg}}(\theta,\{\phi_j\})
\;=\;
\sum_{j=1}^k \alpha_j\;\mathcal{L}\bigl(\theta,\{\phi_i\};\mathcal{D}_j\bigr),
\end{equation}
where $\alpha_j \ge 0$ with $\sum_j \alpha_j=1$. If $\alpha_j=\frac{1}{k}$ for all $j$, we obtain a simple average risk.

\subsection{Assumptions and Domain Discrepancies}

To handle multi-source adaptation rigorously, we introduce assumptions on data distributions, smoothness, and domain overlaps.

\begin{assumption}[Lipschitz Loss and Smoothness]
\label{assump:lipschitz}
Assume $\ell(\hat{y},y)$ is $L$-Lipschitz in $\hat{y}$. Moreover, suppose for any $(\theta,\{\phi_j\})$ and $(\theta',\{\phi'_j\})$, the difference in model outputs is bounded by a constant factor in terms of $\|\theta-\theta'\|_2$ and $\|\phi_j-\phi'_j\|_2$. Formally, there exists a constant $B>0$ such that
\begin{equation}
\bigl\|f_{\theta,\{\phi_j\}}(x)-f_{\theta',\{\phi'_j\}}(x)\bigr\|\;\le\;
B\Bigl(\|\theta-\theta'\|_2 + \sum_{j=1}^k\|\phi_j-\phi'_j\|_2\Bigr)\,.
\end{equation}
\end{assumption}

\begin{definition}[Domain Discrepancy]
\label{def:domain-discrepancy}
Let $d(\mathcal{D}_i,\mathcal{D}_j)$ be the \emph{$\mathcal{H}\Delta\mathcal{H}$-distance} between two domain distributions $\mathcal{D}_i$ and $\mathcal{D}_j$. Concretely,
\begin{equation}
d(\mathcal{D}_i,\mathcal{D}_j)
\;=\;
\sup_{h\in\mathcal{H}}
\;\Bigl|\,
\Pr_{x\sim \mathcal{D}_i}\bigl[h(x)=1\bigr]
\;-\;
\Pr_{x\sim \mathcal{D}_j}\bigl[h(x)=1\bigr]
\,\Bigr|,
\end{equation}
where $\mathcal{H}$ is a suitable hypothesis class.
\end{definition}

\subsection{Complexity of Multi-Source Adapter Updates}

To preserve the implicit regularization from pretraining (i.e., the beneficial ``low-complexity'' region $\theta^*$ has converged to), one typically \emph{restricts} either $\Delta\theta$ or $\{\phi_j\}$ or both. We quantify this through norms/penalties:

\begin{assumption}[Restricted Adapter Complexity]
\label{assump:adapter-capacity}
There exist constants $\rho_\theta,\rho_\phi>0$ such that
\begin{equation}
\|\Delta\theta\|_2 \;\le\; \rho_\theta,
\quad
\|\phi_j\|_2 \;\le\; \rho_\phi \quad \forall\,j\in\{1,\dots,k\}.
\end{equation}
If $\rho_\theta$ is very small (or zero), this means the backbone remains near $\theta^*$; if $\rho_\phi$ is small, each domain adapter is limited in capacity.
\end{assumption}

Consider a Transformer-based large language model (LLM) with $L$ layers, each layer containing a multi-head attention (MHA) sub-layer and a feed-forward network (FFN) sub-layer, plus optional adapter modules for each of $k$ domains. Suppose each attention weight matrix $W_{\text{attn}}^{(\ell)}$ and feed-forward matrix $W_{\text{ffn}}^{(\ell)}$ is constrained by a spectral norm bound (or operator norm) $\|W\|_{\sigma} \le \Omega_{\text{core}}$. Each domain adapter $\phi_j^{(\ell)}$ at layer $\ell$ is constrained by $\|\phi_j^{(\ell)}\|_F \le \Omega_{\text{adapt}}$. We assume a bounded input embedding norm $\|x\|\le C_{\text{in}}$ for sequences of finite length $m$, the nonlinear activations (e.g., GELU, ReLU) are 1-Lipschitz on the relevant domain of outputs.
Under these constraints, we can derive uniform convergence or PAC-Bayes-style bounds. The following lemma refines standard results to the \emph{multi-domain setting} with partial or structured updates.

\begin{lemma}[Rademacher Complexity for Multi-Adapter Transformers]
\label{lemma:rademacher-transformer}
Let $\mathcal{F}$ be the hypothesis class of all such \emph{multi-adapter Transformers} that respect these norm constraints. Then for $n$ i.i.d.\ samples \emph{per domain} from $k$ source domains $\{\mathcal{D}_1,\dots,\mathcal{D}_k\}$, there exists a constant $C_{\text{T}} > 0$ (depending on $L,\,\Omega_{\text{core}},\,\Omega_{\text{adapt}},\,m,\,k,\,C_{\text{in}}$) such that the empirical Rademacher complexity satisfies
\begin{equation}
\widehat{\mathcal{R}}_{n}(\mathcal{F}; \{\mathcal{D}_j\}_{j=1}^k)
\;\le\;
C_{\text{T}} \,\sqrt{\frac{1}{n}},
\end{equation}
indicating that limiting both the core Transformer parameters and the adapter parameters yields a class $\mathcal{F}$ whose complexity grows on the order of $1/\sqrt{n}$.
\end{lemma}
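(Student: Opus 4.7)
The plan is to bound $\widehat{\mathcal{R}}_{n}(\mathcal{F})$ by combining a layer-wise Lipschitz propagation argument with a covering-number / Dudley-entropy argument, and then transport the resulting bound from a single pooled sample to the per-domain statement. I would first establish a uniform output bound: by induction on the layer index $\ell=1,\dots,L$, using $\|W_{\text{attn}}^{(\ell)}\|_\sigma,\|W_{\text{ffn}}^{(\ell)}\|_\sigma\le\Omega_{\text{core}}$, the 1-Lipschitz activations, and the fact that the row-stochastic softmax does not inflate the row-wise $\ell_2$ norm, I obtain a bound on the hidden state of the form $\|h^{(\ell)}\|\le g(\Omega_{\text{core}},\Omega_{\text{adapt}},m,C_{\text{in}})^{\ell}$ where the adapter contributes an additive $\Omega_{\text{adapt}}$-term at each layer. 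Because everything in the recursion is an affine-then-1-Lipschitz composition, the argument is essentially the one used by Bartlett--Foster--Telgarsky for feed-forward nets, extended by one extra term per layer for the adapter branch.

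Next I would derive a Lipschitz constant $\Lambda$ of the map $(\theta,\{\phi_j\})\mapsto f_{\theta,\{\phi_j\}}(x)$ for any fixed $x$ with $\|x\|\le C_{\text{in}}$. This reuses the same recursion: perturbing a single $W_{\text{attn}}^{(\ell)}$, $W_{\text{ffn}}^{(\ell)}$, or $\phi_j^{(\ell)}$ by $\Delta$ in spectral/Frobenius norm changes the output by at most $\Lambda\|\Delta\|$ with $\Lambda$ a polynomial in $(\Omega_{\text{core}}^{L},\Omega_{\text{adapt}},m,C_{\text{in}})$ (this is consistent with Assumption~\ref{assump:lipschitz}). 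Then I apply the standard volumetric covering bound: the feasible set of each weight matrix is a norm ball of dimension at most $d_{\max}$, so it admits an $\varepsilon$-cover of size $(3\Omega/\varepsilon)^{d_{\max}}$, and the same holds for every $\phi_j^{(\ell)}$. Taking a product cover across all $L$ layers, both sub-layers, and all $k$ adapters yields a cover of $\mathcal{F}$ (in sup-norm over any finite sample) of log-size
\begin{equation}
\log \mathcal{N}(\mathcal{F},\varepsilon,\|\cdot\|_\infty)\;\le\;D\,\log\!\Bigl(\tfrac{3\Lambda}{\varepsilon}\Bigr),
\end{equation}
where $D$ aggregates the effective parameter dimensions and depends only on $(L,k,\Omega_{\text{core}},\Omega_{\text{adapt}},m,C_{\text{in}})$, not on $n$.

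From here I would invoke Dudley's entropy integral: for a class whose sup-norm covering number is polynomial, the empirical Rademacher complexity satisfies
\begin{equation}
\widehat{\mathcal{R}}_{n}(\mathcal{F})\;\le\;\frac{C'}{\sqrt{n}}\int_{0}^{R}\!\sqrt{\log\mathcal{N}(\mathcal{F},\varepsilon,\|\cdot\|_\infty)}\,d\varepsilon\;\le\;\frac{C_{\text{T}}}{\sqrt{n}},
\end{equation}
where $R$ is the uniform output bound from step one and $C_{\text{T}}$ absorbs all $(L,\Omega_{\text{core}},\Omega_{\text{adapt}},m,k,C_{\text{in}})$ factors. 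For the multi-domain statement I would apply this pooled bound to the mixture sample of $kn$ points and then split per-domain, or equivalently apply the single-domain version to each $\mathcal{D}_j$ and take the maximum, both of which yield the claimed $O(1/\sqrt{n})$ rate.

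The step I expect to be the main obstacle is the attention sub-layer. Softmax is only $1$-Lipschitz under carefully chosen norms, and the composed map $X\mapsto \mathrm{softmax}(XW_QW_K^\top X^\top/\sqrt{d})XW_V$ is quadratic in the hidden state, so a naive Lipschitz estimate blows up with sequence length $m$. The fix, which I would import from the line of work on Rademacher bounds for Transformers, is to exploit that softmax outputs are row-stochastic (so inputs to the next layer stay bounded) and to control the self-attention Jacobian through $\Omega_{\text{core}}^2$ and a modest polynomial in $m$; this keeps $C_{\text{T}}$ finite and lets the $1/\sqrt{n}$ rate survive. The adapter branch is comparatively easy because it is just an additional norm-bounded linear map per layer, so its contribution enters $D$ and $\Lambda$ additively rather than multiplicatively.
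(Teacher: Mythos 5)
Your proposal follows essentially the same route as the paper's proof: a layer-wise Lipschitz propagation through the norm-bounded attention, feed-forward, and adapter sub-layers, followed by a covering-number argument to obtain the $O(1/\sqrt{n})$ rate (the paper simply cites the Bartlett/Neyshabur line for this step, whereas you spell out the parametric cover and Dudley integral explicitly). Your version is in fact more careful than the paper's at its weakest points --- the quadratic dependence of self-attention on the hidden state and the distinction between a generic Lipschitz function class and a finite-dimensional parametric one --- but the underlying argument is the same.
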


\begin{proof}
Because $\|W_{\text{attn}}^{(\ell)}\|_\sigma,\|W_{\text{ffn}}^{(\ell)}\|_\sigma\le\Omega_{\text{core}}$ and activations are 1-Lipschitz, each layer $\ell$ can be shown to be $(C\cdot\Omega_{\text{core}}^2)$-Lipschitz for some constant $C$.  
\begin{equation}
L_{\ell}\;\le\; C\,(\Omega_{\text{core}}^2)\,.
\end{equation}

Each adapter matrix $\phi_j^{(\ell)}$ satisfies $\|\phi_j^{(\ell)}\|_F \le \Omega_{\text{adapt}}$.  
Hence, adapter operations contribute a bounded perturbation at each layer, maintaining overall Lipschitz continuity.  
The total Lipschitz constant satisfies 
\begin{equation}
L_{\text{total}} \;=\;
\prod_{\ell=1}^L L_{\ell} 
\;\le\;
\bigl(C\,\Omega_{\text{core}}^2\bigr)^L,
\end{equation}
and inputs are bounded by $C_{\mathrm{in}}$.  

By standard covering-number or PAC-Bayes arguments for neural networks \citep{bartlett2002rademacher,neyshabur2017exploring}, any class of $L_{\text{total}}$-Lipschitz functions on inputs of norm at most $C_{\mathrm{in}}$ has empirical Rademacher complexity $O(L_{\text{total}}\,C_{\mathrm{in}}/\sqrt{n})$.  
Absorbing constants (including $k$ for multi-domain) into $C_{\mathrm{T}}$ yields
$
\widehat{\mathcal{R}}_{n}(\mathcal{F};\{\mathcal{D}_j\})
\;\le\;
C_{\mathrm{T}}\,
\sqrt{\tfrac{1}{n}}.
$
\end{proof}

\subsection{Multi-Source Generalization Bounds}

\begin{theorem}[Multi-Source Concurrent Generalization]
\label{thm:multi-domain-gen}
Let $\{\mathcal{D}_1,\dots,\mathcal{D}_k\}$ be $k$ source domains, each with $n_j$ i.i.d.\ samples, and let $n=\sum_{j=1}^k n_j$. 
Assume each domain distribution $\mathcal{D}_j$ is over $(x,y)\in \mathcal{X}\times \mathcal{Y}$, 
and consider a \emph{multi-domain LLM} $f_{\theta,\{\phi_j\}}$ satisfying Assumptions~\ref{assump:lipschitz} (Lipschitzness) and \ref{assump:adapter-capacity} (bounded backbone and adapters).
Let $d(\mathcal{D}_i,\mathcal{D}_j)$ be a domain-discrepancy measure (Definition~\ref{def:domain-discrepancy}), 
and let $\mathcal{F}$ denote the hypothesis class of all $(\theta,\{\phi_j\})$ that respect these constraints.

Then for any confidence level $\delta>0$, with probability at least $1-\delta$ over the choice of 
$\{(x_i,y_i)\}_{i=1}^n$ from $\bigcup_{j=1}^k \mathcal{D}_j$, 
\emph{every} model $f_{\theta,\{\phi_j\}}$ in $\mathcal{F}$ satisfies:
\begin{equation}
\begin{split}
\label{eq:multi-domain-bound-append}
\sum_{j=1}^k \alpha_j \,\mathcal{L}\bigl(\theta,\{\phi_i\};\mathcal{D}_j\bigr) &\le \sum_{j=1}^k \alpha_j\;\widehat{\mathcal{L}}\bigl(\theta,\{\phi_i\};\mathcal{D}_j\bigr) \\
&\quad + \Gamma\bigl(\rho_\theta,\rho_\phi,\{\alpha_j\},k\bigr) + \frac{\beta}{k}\,\sum_{i,j=1}^k d\bigl(\mathcal{D}_i,\mathcal{D}_j\bigr) + O\Bigl(\sqrt{\tfrac{\ln(1/\delta)}{n}}\Bigr),
\end{split}
\end{equation}
where $\widehat{\mathcal{L}}(\theta,\{\phi_i\};\mathcal{D}_j)$ is the \emph{empirical risk} on samples from domain $j$. 
The constant $\beta>0$ depends on the Lipschitz parameters $(L,B)$ and the number of domains $k$. 
The explicit function $\Gamma\bigl(\rho_\theta,\rho_\phi,\{\alpha_j\},k\bigr)$ can be chosen as 
\begin{equation}
\label{eq:Gamma-explicit}
\Gamma\bigl(\rho_\theta,\rho_\phi,\{\alpha_j\},k\bigr)
\;=\;
2\,L\,B
\left(\,
\rho_\theta
\;+\;
\sum_{j=1}^k \alpha_j\,\rho_\phi
\right),
\end{equation}
reflecting how large backbone updates $(\rho_\theta)$ and adapter norms $(\rho_\phi)$ can inflate the multi-domain generalization bound.
\end{theorem}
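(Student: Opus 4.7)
The plan is to decompose the weighted expected risk $\sum_j \alpha_j\,\mathcal{L}(\theta,\{\phi_i\};\mathcal{D}_j)$ into three pieces: the weighted empirical risk, a uniform convergence gap controlled via the Rademacher complexity of $\mathcal{F}$, and a cross-domain transfer remainder measured by the $\mathcal{H}\Delta\mathcal{H}$-discrepancy. The starting point is a standard symmetrization argument applied per domain: for each $j$, with probability at least $1-\delta/k$, every $f\in\mathcal{F}$ satisfies $\mathcal{L}(\theta,\{\phi_i\};\mathcal{D}_j) \le \widehat{\mathcal{L}}(\theta,\{\phi_i\};\mathcal{D}_j) + 2\widehat{\mathcal{R}}_{n_j}(\mathcal{F};\mathcal{D}_j) + O(\sqrt{\ln(k/\delta)/n_j})$ by McDiarmid's inequality. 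A union bound over $j$, followed by weighting by $\alpha_j$ and invoking \Cref{lemma:rademacher-transformer} on the Rademacher term, yields the $O(\sqrt{\ln(1/\delta)/n})$ confidence piece, after absorbing the per-domain $1/\sqrt{n_j}$ factors into the overall $1/\sqrt{n}$ scale via $n=\sum_j n_j$ and Jensen's inequality.

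Next, to extract the explicit $\Gamma(\rho_\theta,\rho_\phi,\{\alpha_j\},k)$ term, I would invoke \Cref{assump:lipschitz} (Lipschitzness) together with \Cref{assump:adapter-capacity} (bounded backbone and adapter norms). For any $(\theta,\{\phi_j\}),(\theta',\{\phi'_j\})\in\mathcal{F}$, combining the $L$-Lipschitzness of $\ell$ with the $B$-Lipschitzness of $f$ gives $|\ell(f_{\theta,\{\phi_j\}}(x),y) - \ell(f_{\theta',\{\phi'_j\}}(x),y)| \le LB\,(\|\theta-\theta'\|_2 + \sum_j\|\phi_j-\phi'_j\|_2)$. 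Choosing the reference hypothesis $(\theta'=\theta^*,\phi'_j=0)$ and inserting the norm bounds $\|\Delta\theta\|_2\le\rho_\theta$, $\|\phi_j\|_2\le\rho_\phi$ yields a per-hypothesis deviation of at most $LB(\rho_\theta + \sum_j\alpha_j\rho_\phi)$; applying the triangle inequality symmetrically (empirical-vs-expected on either side of the reference) doubles this and produces exactly the form of \cref{eq:Gamma-explicit}.

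The cross-domain term comes from the multi-source analog of the Ben-David transfer inequality. The key observation is that for any fixed hypothesis, $|\mathcal{L}(\theta,\{\phi_\cdot\};\mathcal{D}_i) - \mathcal{L}(\theta,\{\phi_\cdot\};\mathcal{D}_j)| \le \beta\,d(\mathcal{D}_i,\mathcal{D}_j)$ with $\beta$ depending on $L$ and $B$, by the definition of $\mathcal{H}\Delta\mathcal{H}$-distance combined with the Lipschitz composition of $\ell$ and $f$. To symmetrize over all domains, I would rewrite each $\mathcal{L}(\cdot;\mathcal{D}_j)$ as the average $\frac{1}{k}\sum_i\mathcal{L}(\cdot;\mathcal{D}_i)$ plus $k$ corrections of the form $\frac{1}{k}(\mathcal{L}(\cdot;\mathcal{D}_j)-\mathcal{L}(\cdot;\mathcal{D}_i))$, bound each correction by $\frac{\beta}{k}d(\mathcal{D}_i,\mathcal{D}_j)$, and sum with weights $\alpha_j$ (which telescope against $\sum_j\alpha_j=1$) to recover the $\frac{\beta}{k}\sum_{i,j}d(\mathcal{D}_i,\mathcal{D}_j)$ term. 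Concatenating all three pieces and noting that the inequality holds for \emph{every} $f\in\mathcal{F}$ (since the Rademacher bound and parameter bound are uniform in $\mathcal{F}$) gives \cref{eq:multi-domain-bound-append}.

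The main obstacle I anticipate is the multi-source discrepancy step: Ben-David's original bound is source-to-target, so establishing the symmetric pairwise form with the correct $\beta/k$ normalization while keeping the bound \emph{uniform} over $\mathcal{F}$ (rather than pointwise) requires care. In particular, one must ensure the $\sup$ over the hypothesis class is absorbed into the $\mathcal{H}\Delta\mathcal{H}$-distance definition itself so it does not multiply the number of pairs, and that the constant $\beta$ captures the joint Lipschitz/range factors without scaling adversely in $k$. A secondary subtlety is the union bound and sample allocation across domains, which must be handled so that the final confidence term depends on $n$ rather than $\min_j n_j$ — this requires either Jensen's inequality on $1/\sqrt{n_j}$ or a direct McDiarmid argument on the aggregated loss.
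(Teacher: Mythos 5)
Your proposal is correct and follows essentially the same three-part decomposition as the paper's proof in Appendix~\ref{appendix:thm-concurrent-full-proof}: a uniform-convergence step yielding $\Gamma$ via the Lipschitz/norm constraints, a Blitzer--Ben-David-style pairwise discrepancy correction giving the $\tfrac{\beta}{k}\sum_{i,j}d(\mathcal{D}_i,\mathcal{D}_j)$ term, and a concentration term combined at the end. The only (cosmetic) differences are that you concentrate per domain and union-bound over $j$ where the paper works with the pooled sample of size $n$ (Lemma~\ref{lem:rademacher}), and you obtain the factor $2LB(\rho_\theta+\sum_j\alpha_j\rho_\phi)$ via a reference-hypothesis triangle inequality rather than Rademacher contraction; both routes share the same looseness (e.g.\ the unweighted-to-$\alpha_j$-weighted replacement) that the paper itself exhibits.
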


\begin{proof}[Proof Sketch]
The bound is the sum of three classic ingredients.  
(i) \emph{Uniform-convergence:} using Rademacher complexity for the norm-restricted class $\mathcal{F}$, the difference between the weighted expected risk $\sum_j\alpha_j\mathcal{L}$ and its empirical counterpart is $O\!\bigl(LB(\rho_\theta+\sum_j\alpha_j\rho_\phi)+\sqrt{\tfrac{\ln(1/\delta)}{n}}\bigr)$, giving the term $\Gamma(\rho_\theta,\rho_\phi,\{\alpha_j\},k)=2LB(\rho_\theta+\sum_j\alpha_j\rho_\phi)$.  
(ii) \emph{Domain-shift:} standard multi-source adaptation results add a penalty proportional to the average pairwise discrepancy $\tfrac{\beta}{k}\sum_{i,j}d(\mathcal{D}_i,\mathcal{D}_j)$.  
(iii) Combining these with the empirical risk yields inequality \eqref{eq:multi-domain-bound-append}. Refer to Section \ref{appendix:thm-concurrent-full-proof} for the complete proof.
\end{proof}

\begin{remark}[Domain Similarity {\small vs.} Model Capacity]
Let  
\(
D_{\max}:=\max_{i,j} d(\mathcal{D}_{i},\mathcal{D}_{j})
\)
and recall that the discrepancy penalty in
Theorem~\ref{thm:multi-domain-gen} is  
\(\tfrac{\beta}{k}\sum_{i,j}d(\mathcal{D}_{i},\mathcal{D}_{j})
\;\le\;\beta D_{\max}\).
Hence, when all domains are \emph{similar} ($D_{\max}\!\ll\!1$) the
extra cost is small and the bound is dominated by the complexity term
$\Gamma(\rho_\theta,\rho_\phi,\boldsymbol{\alpha},k)
   =2LB(\rho_\theta+\sum_j\alpha_j\rho_\phi)$.
This means one can keep \(\rho_\theta,\rho_\phi\)---and thus
\(\Gamma\)---\emph{small} without under-fitting.
Conversely, if the domains are very different ($D_{\max}$ large) the
discrepancy term becomes the bottleneck; the learner must allow a
\emph{larger} parameter budget (bigger
$\rho_\theta,\rho_\phi\!\Rightarrow\!\Gamma$) so that each domain
receives enough specialised capacity to avoid under-fitting.
\end{remark}

To trade off \emph{discrepancy}, \emph{synergy}, and per-stage
\emph{capacity}, we partition the $k$ domains into
$M$ disjoint stages $S_1,\dots,S_M$ and maximise

\begin{align}
\label{eq:G-def}
\mathcal{G}\!\bigl(S_1,\dots,S_M\bigr)
&=\!
-\sum_{t=1}^{M}\!
\Bigl[
      \underbrace{\sum_{\substack{i,j\in S_t\\ i<j}} 
      d(\mathcal{D}_{i},\mathcal{D}_{j})}_{\text{total\,discrepancy}}
      \;-\;
      \lambda\!
      \underbrace{\sum_{\substack{i,j\in S_t\\ i<j}}
      s(\mathcal{D}_{i},\mathcal{D}_{j})}_{\text{total\,synergy}}
      \;+\;
      \underbrace{\mu_\theta\|\Delta\theta^{t}\|_{2}^{2}
      +\mu_\phi\!\!\sum_{j\in S_t}\!\|\phi^{t}_{j}\|_{2}^{2}}_{\text{capacity\,cost } \mathrm{Cap}(S_t)}
\Bigr],
\end{align}

\vspace{-1ex}
where \(
d(\mathcal{D}_{i},\mathcal{D}_{j})
   :=\mathrm{JS}\!\bigl(P_{i},P_{j}\bigr)
   \in[0,1]
\)
is the Jensen-Shannon divergence between the empirical
token-distribution of the two domains;
\(
s(\mathcal{D}_{i},\mathcal{D}_{j})
   :=\tfrac12\Bigl(
        \underbrace{\operatorname{Jacc}\!(V_{i},V_{j})}_{\text{vocab-overlap}}
        +
        \underbrace{\cos(\mu_{i},\mu_{j})}_{\text{mean-embedding\,cosine}}
     \Bigr)
   \in[0,1]
\)
combines lexical and semantic affinity (higher\,=\,more synergy);
$\lambda>0$ balances ``rewarding'' synergy against ``penalising''
      discrepancy;
$\mu_\theta,\mu_\phi>0$ weight the squared-norm budget of the
      backbone drift
      $\Delta\theta^{t}:=\theta^{t}-\theta^{t-1}$
      and the stage-specific adapters $\{\phi^{t}_{j}\}$.
     
A larger value of $\mathcal{G}$ therefore corresponds to:
\emph{(i)} smaller internal discrepancies,
\emph{(ii)} larger constructive synergy,
and \emph{(iii)} lower per-stage parameter cost.
Maximising~\eqref{eq:G-def} over all $M$-partitions yields the
partition that minimises the generalisation upper-bound derived in
Theorem~\ref{thm:advanced-partition}.

\begin{theorem}[Multi-Stage Partition with Synergy-Capacity Maximisation]
\label{thm:advanced-partition}
Let the $k$ source domains be split into $M$ \emph{disjoint} stages
$S_1,\ldots,S_M$ and let the stage-objective
$\mathcal{G}(S_1,\ldots,S_M)$ be defined in~\eqref{eq:G-def}.
Write
\(
(S_1^{*},\ldots,S_M^{*})
:=\arg\max_{\,\bigsqcup_{t}S_t=\{1{:}k\}}\mathcal{G}(S_1,\ldots,S_M)
\).
Then, under Assumptions~\ref{assump:lipschitz} 
 and \ref{assump:adapter-capacity},
the predictor obtained after the \emph{last} stage,
$f_{\theta^{M},\{\phi_j^{M}\}}$,
satisfies with probability at least $1-\delta$:
\begin{equation}
\label{eq:multi-stage-bound}
\;
\mathcal{R}_{\max}\!\bigl(S_1^{*},\ldots,S_M^{*}\bigr)
\;\le\;
\bigl[\,1-\mathcal{G}(S_1^{*},\ldots,S_M^{*})\bigr]_{+}
\;+\;
O\!\bigl(\sqrt{\tfrac{\ln(1/\delta)}{N}}\bigr)
\;
\end{equation}
where $N=\sum_{j=1}^{k}n_j$, 
$\mathcal{R}_{\max}:=\max_{t}\sum_{j\in S_t}\alpha_{j}^{t}
                      \,\mathcal{L}_{\mathcal{D}_j}\bigl(f\bigr)$,
and $[u]_{+}:=\max\{0,u\}$.
Any other partition attains a \textit{larger} right-hand side.
\end{theorem}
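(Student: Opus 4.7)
The plan is to reduce Theorem~\ref{thm:advanced-partition} to a stage-wise application of Theorem~\ref{thm:multi-domain-gen}, and then align the three ingredients of $\mathcal{G}$ in~\eqref{eq:G-def} with the uniform-convergence, discrepancy, and complexity terms of~\eqref{eq:multi-domain-bound-append}. Concretely, at the $t$-th stage the learner only sees $\{\mathcal{D}_j\}_{j\in S_t}$ with backbone drift $\Delta\theta^{t}$ and adapters $\{\phi_j^{t}\}_{j\in S_t}$ respecting Assumption~\ref{assump:adapter-capacity}. Applying Theorem~\ref{thm:multi-domain-gen} inside stage $S_t$ and using a union bound over the $M$ stages at confidence $\delta/M$ yields
\begin{equation*}
\sum_{j\in S_t}\alpha_j^{t}\,\mathcal{L}(\theta^{t},\{\phi_j^{t}\};\mathcal{D}_j)
\;\le\;
\sum_{j\in S_t}\alpha_j^{t}\,\widehat{\mathcal{L}}_j
\;+\;
\Gamma_t
\;+\;
\tfrac{\beta}{|S_t|}\!\!\sum_{i,j\in S_t}\!\!d(\mathcal{D}_i,\mathcal{D}_j)
\;+\;
O\!\bigl(\sqrt{\tfrac{\ln(M/\delta)}{N_t}}\bigr),
\end{equation*}
where $N_t=\sum_{j\in S_t}n_j$ and $\Gamma_t$ is the capacity term of~\eqref{eq:Gamma-explicit} restricted to the indices of $S_t$.

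Next, I would match the three ingredients of~\eqref{eq:G-def} with the stage-wise bound. First, the capacity cost $\mathrm{Cap}(S_t)=\mu_\theta\|\Delta\theta^{t}\|_2^{2}+\mu_\phi\sum_{j\in S_t}\|\phi_j^{t}\|_2^{2}$ dominates $\Gamma_t$ once we invoke Assumption~\ref{assump:adapter-capacity}: since $\|\Delta\theta^{t}\|_2\le\rho_\theta$ and $\|\phi_j^{t}\|_2\le\rho_\phi$, an AM--GM step gives $\|\cdot\|_2\le\|\cdot\|_2^{2}/\rho+\rho/4$, so calibrating $\mu_\theta,\mu_\phi$ proportionally to $LB/\rho_\theta$ and $LB\max_j\alpha_j^{t}/\rho_\phi$ yields $\Gamma_t\le\mathrm{Cap}(S_t)$ up to an absolute additive constant that is absorbed into the stochastic remainder. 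Second, the Jensen--Shannon penalty in~\eqref{eq:G-def} lines up with $\tfrac{\beta}{|S_t|}\sum_{i,j\in S_t}d(\mathcal{D}_i,\mathcal{D}_j)$ after rescaling $\beta$, because $\mathrm{JS}(P_i,P_j)$ is a bona fide $\mathcal{H}\Delta\mathcal{H}$-dominating divergence for bounded Lipschitz $\mathcal{H}$ (Pinsker-type inequality). Third, to inject the negative synergy term $-\lambda\sum_{i<j\in S_t}s(\mathcal{D}_i,\mathcal{D}_j)$ I would use a multi-task style argument: interpret $s$ as a lower bound on the joint-training empirical-risk reduction relative to isolated fine-tuning, so that $\sum_{j\in S_t}\alpha_j^{t}\widehat{\mathcal{L}}_j\le 1-\lambda\!\sum_{i<j\in S_t}s(\mathcal{D}_i,\mathcal{D}_j)$ under the normalised loss convention tacitly used throughout Section~\ref{sec:theoretical-analysis}. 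Substituting the three upper bounds into the stage-wise inequality, taking $\mathcal{R}_{\max}=\max_t\sum_{j\in S_t}\alpha_j^{t}\mathcal{L}_{\mathcal{D}_j}$, and recognising the resulting algebraic combination as precisely $-\mathcal{G}(S_1,\ldots,S_M)$ plus a constant of $1$, one obtains the RHS of~\eqref{eq:multi-stage-bound}; the positive part $[\,\cdot\,]_{+}$ simply truncates the bound whenever $\mathcal{G}>1$, a vacuous regime. The optimality claim is then immediate: since $(S_1^{*},\ldots,S_M^{*})$ maximises $\mathcal{G}$, it minimises $[1-\mathcal{G}]_{+}$, so every alternative partition inflates the right-hand side.

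The main obstacle is precisely the synergy step: classical uniform-convergence bounds do not naturally produce a \emph{negative} term proportional to $\lambda\sum s(\mathcal{D}_i,\mathcal{D}_j)$, so the interpretation of $s$ as an empirical-risk offset has to be justified either through (i) an explicit information-theoretic lemma relating vocabulary-overlap and mean-embedding-cosine to a multi-task risk reduction, or (ii) a looser route in which synergy increases the effective sample size to $n_{\text{eff}}=n_t(1+\lambda\bar{s})$, thereby shrinking the stochastic $O(\sqrt{\ln(1/\delta)/N})$ remainder. Both routes require additional structure beyond Assumptions~\ref{assump:lipschitz}--\ref{assump:adapter-capacity}; I would argue the cleanest path is (i) combined with a short Pinsker-style argument for the JS term. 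The calibration tying $\Gamma_t$ to $\mathrm{Cap}(S_t)$ is mildly technical but reduces to an AM--GM calculation once $\mu_\theta,\mu_\phi$ are chosen as above, and the union bound over $M$ stages merely rescales logarithmic constants inside the $O(\sqrt{\ln(1/\delta)/N})$ remainder with $N=\sum_t N_t$.
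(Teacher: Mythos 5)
Your proposal follows essentially the same route as the paper's proof: apply Theorem~\ref{thm:multi-domain-gen} stage-wise with $\alpha_j^{t}=n_j/\sum_{i\in S_t}n_i$, bound the stage empirical loss by $1$, calibrate $\mu_\theta,\mu_\phi$ so that the capacity term of the generalization bound is dominated by $\mathrm{Cap}(S_t)$, take the maximum over stages, and use the monotone decrease of $u\mapsto[1-u]_{+}$ to conclude optimality of the $\mathcal{G}$-maximising partition. The one place you diverge is also the place you explicitly flag as the main obstacle --- injecting the negative term $-\lambda\sum_{i<j}s(\mathcal{D}_i,\mathcal{D}_j)$ --- and your concern is well founded: the paper's own proof handles this by ``subtracting and adding'' the synergy sum to the stage bound, which as written nets to zero and does not validly yield a strictly smaller upper bound; neither of your proposed remedies (a risk-reduction lemma for $s$, or an effective-sample-size argument) appears in the paper, and both would indeed require structure beyond Assumptions~\ref{assump:lipschitz}--\ref{assump:adapter-capacity}. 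In short, your reconstruction matches the paper's argument, and the gap you identify is present, unresolved, in the original proof as well.
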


\begin{proof}[Proof Sketch]
Apply the single-stage bound (Theorem~\ref{thm:multi-domain-gen})
stage-wise.  
For stage~$t$ the risk is controlled by empirical loss
$+\,$capacity$+\,$discrepancy$-\lambda\,\text{synergy}$.
Summing the worst stage and noting that empirical losses are
$\le1$ yields~\eqref{eq:multi-stage-bound}.
Because $-\mathcal{G}(\cdot)$ appears inside the bracket, maximising
$\mathcal{G}$ minimises the bound, proving optimality of the
partition $(S_1^{*},\ldots,S_M^{*})$.
A full derivation is given in
Appendix~\ref{app:partition-proof}.
\end{proof}

\begin{corollary}[High-Synergy Subset Tends to be Grouped Together]
\label{cor:high-synergy-grouping}
Let $\{\mathcal{D}_1,\dots,\mathcal{D}_k\}$ be $k$ domains with a synergy-discrepancy-capacity objective $\mathcal{G}(\{S_t\})$ as defined in \eqref{eq:G-def}. Suppose there exists a nonempty subset $U \subseteq \{1,\dots,k\}$ such that any pair $(i,j)$ in $U$ satisfies
\begin{equation}
d\bigl(\mathcal{D}_i,\mathcal{D}_j\bigr) 
\;\le\; \gamma 
\quad\text{and}\quad
\mathrm{Synergy}\bigl(\mathcal{D}_i,\mathcal{D}_j\bigr) 
\;\ge\; \Lambda,
\end{equation}
where $\Lambda$ is large relative to $\gamma$ and to the capacity penalty $\mathrm{Cap}(U)$. Then, in the optimal partition 
$
\arg\max_{\,\{S_1,\dots,S_M\}} \;\mathcal{G}\bigl(\{S_1,\dots,S_M\}\bigr),
$
the domains in $U$ will typically be placed in a single stage $S_t^*$, provided
\begin{equation}
\Lambda \;>\; \lambda^{-1}\bigl(\gamma + \mathrm{Cap}(U)\bigr).
\end{equation}
That is, if the synergy within $U$ is sufficiently large compared to its internal discrepancy and added capacity cost, then clustering those domains together in the same stage yields a higher objective $\mathcal{G}$, thereby tightening the final multi-stage generalization bound.
\end{corollary}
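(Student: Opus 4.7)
The plan is to prove the corollary by an exchange argument on the objective \eqref{eq:G-def}. I would assume for contradiction that an optimal partition $(S_1^{*},\dots,S_M^{*})$ splits $U$ across two or more stages, then construct a competitor that groups all of $U$ into a single stage, and show that the competitor strictly increases $\mathcal{G}$. This contradicts optimality and forces $U$ to lie inside one stage.

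\textbf{Main steps.} First I would write $U_t := U\cap S_t^{*}$ and pick $t^{*}\in\arg\max_{t}|U_t|$, so that the fewest domains are relocated. Define the competitor $\tilde S$ by $\tilde S_{t^{*}} = S_{t^{*}}^{*}\cup U$ and $\tilde S_t = S_t^{*}\setminus U$ for $t\ne t^{*}$. Next I would decompose $\mathcal{G}(\tilde S)-\mathcal{G}(S^{*})$ into three pieces: (i) an \emph{internal} change from pairs $(i,j)\in U\times U$ that are now co-located but were previously separated; (ii) an \emph{external} change from pairs $(j,i)$ with $j\in U$ relocated and $i\notin U$; and (iii) a \emph{capacity} change from reassigning adapters to stage $t^{*}$. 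The internal piece contributes at least $\lambda\Lambda-\gamma$ per newly co-located pair, by the hypotheses $d(\mathcal{D}_i,\mathcal{D}_j)\le\gamma$ and $s(\mathcal{D}_i,\mathcal{D}_j)\ge\Lambda$, and at least one such pair exists because $U$ was split. The capacity piece is bounded above by $\mathrm{Cap}(U)$ because the stage cost is additive in the squared adapter norms $\|\phi_j^{t}\|_2^{2}$ and the backbone-drift term $\|\Delta\theta^{t^{*}}\|_2^{2}$ is invariant to which adapters reside in stage $t^{*}$. Combining these bounds yields a gain of at least $(\lambda\Lambda-\gamma)-\mathrm{Cap}(U)$, which is strictly positive exactly under the hypothesis $\Lambda>\lambda^{-1}(\gamma+\mathrm{Cap}(U))$.

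\textbf{Main obstacle.} The hard part is controlling the external term (ii): shifting a domain $j\in U$ from $S_t^{*}$ into $S_{t^{*}}^{*}$ transfers its pairwise $s$ and $d$ values with every non-$U$ member of the two stages, and these transfers have no a priori sign. I would address this in one of two complementary ways. The clean approach exploits the maximal choice of $t^{*}$: a standard averaging/swap argument shows that among all candidate target stages the summed external shift is zero, so at least one choice of $t^{*}$ makes the net externality non-positive. The quantitative approach uses the uniform boundedness $s,d\in[0,1]$ to absorb the externality into a constant that is dominated by the strict slack $\lambda\Lambda-\gamma-\mathrm{Cap}(U)$ whenever $\Lambda$ is sufficiently larger than the right-hand side of the hypothesis. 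The word \emph{typically} in the statement acknowledges exactly this externality: once the synergy dominance is strict enough, the internal gain absorbs any bounded cross-stage shift, the inequality $\Delta\mathcal{G}>0$ closes the contradiction, and the domains of $U$ must be co-located in a single stage $S_{t^{*}}^{*}$ of the optimum.
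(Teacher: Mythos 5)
Your proposal follows essentially the same route as the paper: assume the optimal partition splits $U$, merge $U$ into one stage, and show $\mathcal{G}$ strictly increases because each reunited pair gains $\lambda s - d \ge \lambda\Lambda - \gamma$, which dominates the capacity change whenever $\Lambda > \lambda^{-1}(\gamma + \mathrm{Cap}(U))$; this is exactly the paper's contradiction argument. You are in fact more careful than the paper on the one genuinely delicate point: the paper's proof silently ignores the \emph{external} pairs $(j,i)$ with $j\in U$, $i\notin U$ whose $d-\lambda s$ contributions are transferred between stages by the merge, whereas you correctly identify this as the main obstacle. Be aware, though, that your ``clean'' fix is not airtight as stated --- the summed external shift over candidate target stages is not zero in general, since the pairs lost at the source stages are the same for every choice of target while the pairs gained depend on the target, so the averaging argument does not immediately produce a non-positive choice. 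Your quantitative fallback (absorbing a bounded externality into a sufficiently large slack $\lambda\Lambda-\gamma-\mathrm{Cap}(U)$) requires a stronger hypothesis than the stated threshold, which is presumably why the corollary says ``typically''; at that level of rigor your proposal matches, and arguably exceeds, the paper's own proof.
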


\begin{proof}
Assume, for contradiction, that $U$ is split across multiple stages in the supposed optimal partition. 
Because synergy offsets discrepancy by $\lambda\,\mathrm{Synergy}(\cdot,\cdot)$,  
each pair $(i,j)\in U$ that lies in different stages forfeits this positive synergy benefit.  
Thus, the total contribution to $\mathcal{G}(\cdot)$ from $U$ decreases by at least 
$\lambda(\Lambda - \tfrac{\gamma}{\lambda})$ per cross-stage pair, which outweighs any savings in capacity usage provided that 
$\Lambda > \frac{\gamma + \mathrm{Cap}(U)}{\lambda}$.  
Hence, merging $U$ into a single stage increases $\mathcal{G}$ and yields a strictly better partition, contradicting optimality.  
Thus $U$ must remain in one stage in $\{S_1^*,\dots,S_M^*\}$. 
\end{proof}

\section{Algorithm}
\label{sec:algorithm}

We now present a practical procedure implementing our theoretical insights from Section~\ref{sec:theoretical-analysis}.  
Algorithm~\ref{alg:ms-partition-adapter} details the steps to: 
\emph{(1) partition} $k$ domains into up to $M$ stages (sets) to maximize synergy and control discrepancy/capacity, 
and 
\emph{(2) perform stage-wise adapter tuning} under bounding norms for both the LLM backbone and the domain-specific adapters.

\begin{algorithm}[h]
\caption{Multi-Stage Adapter Tuning for LLMs}
\label{alg:ms-partition-adapter}
\begin{algorithmic}[1]

\REQUIRE 
Pretrained LLM parameters $\theta^*\in\mathbb{R}^p$; 
$k$ source domains $\{\mathcal{D}_1, \dots, \mathcal{D}_k\}$; 
discrepancy measure $d(\mathcal{D}_i,\mathcal{D}_j)$; 
synergy measure $\mathrm{Synergy}(\mathcal{D}_i,\mathcal{D}_j)$; 
capacity cost $\mathrm{Cap}(\cdot)$; 
norm bounds $\rho_\theta,\rho_\phi$; 
number of stages $M$; 
(optional) mixing weights $\{\alpha_j^t\}$.  

\ENSURE 
Final backbone parameters $\theta^M$; 
domain adapter parameters $\{\phi_j^M\}_{j=1}^k$.

\vspace{0.3em}
\STATE \textbf{Partition step:} 
Select disjoint subsets $\{S_1,\dots,S_M\}$ of $\{1,\dots,k\}$ to approximately solve the objective given in the theoretical section (see \eqref{eq:G-def} and Theorem~\ref{thm:advanced-partition}).

\vspace{0.3em}
\STATE \textbf{Initialize:} 
$\theta^0 \gets \theta^*$, 
$\phi_j^0 \gets \mathbf{0}$ for $j=1,\dots,k$.

\vspace{0.3em}
\FOR{$t = 1$ to $M$}
  \STATE \textbf{Stage-$t$ domains:} $S_t$ determined by the partition in Line 1.
  \STATE \textbf{Form the stage objective:} Use the multi-domain loss from Equation \eqref{eq:multi-domain-bound-append}, enforcing $\|\theta^t - \theta^{t-1}\|_2 \le \rho_\theta$ and $\|\phi_j^t\|_2 \le \rho_\phi$.
  \STATE \textbf{Optimize:} 
  \[
    (\theta^t,\{\phi_j^t\}_{j \in S_t}) 
    \;\gets\; 
    \mathrm{Optimizer}\bigl(\theta^{t-1}, \{\phi_j^{t-1}\}, \mathcal{D}_{S_t}\bigr).
  \]
  \STATE \textbf{Outside-stage adapters:} 
  \[
    \phi_j^t = \phi_j^{t-1} \quad \text{for}\; j\notin S_t.
  \]
\ENDFOR

\vspace{0.3em}
\STATE \textbf{Output:} 
$
  \theta^M,\quad \{\phi_j^M\}_{j=1}^k.
$
\end{algorithmic}
\end{algorithm}

\paragraph{Computational complexity.}
The only extra overhead of our method occurs during the partition step.  
Forming the discrepancy and synergy matrices requires \(O(k^{2})\) pairwise
computations, each obtained once from cached token or embedding statistics.
We maximise \(\mathcal{G}\) with a single-link agglomerative search,
which runs in \(O(k^{2}\!\log k)\) time and \(O(k^{2})\) memory; an exact ILP
solver gives the same split for our \(k\!\le\!10\) domains in under
\(0.1\) s, but the heuristic is already within \(1\%\) of the optimum.
Afterwards, each stage performs \emph{supervised fine-tuning}
(SFT) of the LLM on its assigned data once-no replay or re-weighting-so
runtime and GPU memory are identical to a standard single-pass SFT run,
apart from the tiny adapter parameters (\(<1\%\) of the backbone).  
Overall complexity is therefore \(O(k^{2}\!\log k)+\text{(single-pass SFT)}\);
with the moderate domain counts typical in practice, the partition phase
is negligible in both time and memory.


\section{Experiments}
\paragraph{Datasets}
\label{subsec:datasets}
We evaluate our method on four representative multi-domain language understanding tasks: 1) News Summarization (NSum)~\cite{hermann2015teaching}. A dataset of news articles paired with short summaries. We measure summarization quality via ROUGE-L. 2) Sentiment Classification (Sent)~\cite{socher2013recursive}. Sentences labeled with positive/negative sentiment. We measure accuracy (ACC). 
3) Question Answering (Q\&A)~\cite{rajpurkar2016squad}. Documents and question-answer pairs. We measure exact-match (EM) and F1 scores. 
4) Topic Categorization (Topic)~\cite{zhang2015character}. Short text passages assigned to 5~coarse-grained topics. We measure classification accuracy (ACC).
We partition each dataset into training, validation, and test splits. Statistics (number of samples, average text length, etc.) are presented in Appendix~\ref{app:data}.  

\paragraph{Pretrained Models}
\label{subsec:pretrained-models}
We employ three popular open-source large language models (LLMs), all of which are publicly available via the HuggingFace Transformers library:
1) LLaMA2-7B \cite{touvron2023llama}: A 7-billion-parameter model trained on a large, diverse corpus. 
2) LLaMA2-13B\cite{touvron2023llama}. A 13-billion-parameter model offering improved capacity and performance over the 7B variant. 3) Falcon-40B\cite{almazrouei2023falcon}. A 40-billion-parameter model pretrained on the RefinedWeb dataset, demonstrating state-of-the-art generative abilities.
Each model is pretrained on diverse textual sources. We use their publicly released checkpoints for all experiments.

\begin{table*}[th]
\centering
\caption{Performance comparison on three LLM backbones. PMS-FTP denotes our proposed \emph{Partition-Based Multi-Stage Fine-Tuning}. Best results are in \textbf{bold}.}
\label{tab:results-main}
\resizebox{\linewidth}{!}{
\begin{tabular}{lcccc|cccc|cccc}
\toprule
& \multicolumn{4}{c|}{\textbf{LLaMA2-7B}} & \multicolumn{4}{c|}{\textbf{LLaMA2-13B}} & \multicolumn{4}{c}{\textbf{Falcon-40B}} \\
\textbf{Method} & NSum & Q\&A & Sent & Topic & NSum & Q\&A & Sent & Topic & NSum & Q\&A & Sent & Topic \\ 
\midrule
\textit{Base Methods} & & & & & & & & & & & & \\
FULL                   & 41.2 & 64.7 & 89.0 & 86.5 & 42.1 & 66.3 & 89.8 & 87.1 & 43.2 & 68.2 & 90.4 & 88.3 \\
FIXED                  & 38.9 & 59.5 & 87.4 & 85.2 & 39.6 & 61.2 & 88.3 & 85.7 & 40.7 & 63.0 & 88.9 & 86.1 \\
\midrule
\textit{Domain Adaptation} & & & & & & & & & & & & \\
MDAN~\citep{pei2018multiadversarial}      & 39.7 & 62.8 & 88.1 & 85.9 & 40.5 & 64.0 & 88.9 & 86.3 & 41.7 & 66.1 & 89.3 & 87.0 \\
M$^3$SDA~\citep{peng2019moment}           & 40.5 & 63.1 & 88.6 & 86.1 & 41.7 & 64.9 & 89.4 & 86.7 & 42.3 & 66.6 & 89.9 & 87.4 \\
GMDI~\citep{ling2024bayesian}             & 40.8 & 63.5 & 88.7 & 86.4 & 42.0 & 65.4 & 89.6 & 87.0 & 42.7 & 67.1 & 90.0 & 87.6 \\
\midrule
\textit{Single-Domain LLM Fine-tuning} & & & & & & & & & & & & \\
LoRA~\citep{hu2021lora}                   & 41.0 & 63.9 & 88.4 & 86.2 & 42.0 & 65.1 & 89.1 & 86.9 & 42.5 & 66.5 & 89.8 & 87.7 \\
Adapter~\citep{houlsby2019parameter}      & 41.3 & 64.1 & 88.9 & 86.3 & 42.3 & 65.7 & 89.5 & 87.2 & 42.9 & 67.0 & 90.2 & 88.0 \\
LLaMA-Adapter~\citep{zhang2024llama}      & 41.5 & 64.3 & 89.1 & 86.7 & 42.6 & 65.9 & 89.7 & 87.5 & 43.1 & 67.3 & 90.3 & 88.1 \\
Q-LoRA~\citep{dettmers2023qlora}          & 41.7 & 64.4 & 89.0 & 86.5 & 42.4 & 65.6 & 89.5 & 87.3 & 43.0 & 67.2 & 90.1 & 87.9 \\
Tag-LLM~\citep{shen2024tag}               & 41.6 & 64.6 & 89.2 & 86.8 & 42.7 & 66.1 & 89.8 & 87.6 & 43.3 & 67.5 & 90.5 & 88.2 \\
\midrule
\textit{Data Selection} & & & & & & & & & & & & \\
INSTRUCTMINING~\citep{cao2024instruction} & 41.8 & 64.5 & 89.3 & 86.9 & 42.8 & 66.0 & 89.9 & 87.7 & 43.4 & 67.6 & 90.6 & 88.3 \\
S2L~\citep{yang2024smalltolarge}          & 41.9 & 64.7 & 89.4 & 87.0 & 42.9 & 66.2 & 90.0 & 87.8 & 43.5 & 67.8 & 90.7 & 88.4 \\
\midrule
\textbf{PMS-FTP (Ours)}                   & \textbf{42.5} & \textbf{65.5} & \textbf{89.7} & \textbf{87.3} & \textbf{43.4} & \textbf{67.2} & \textbf{90.2} & \textbf{88.0} & \textbf{44.2} & \textbf{69.1} & \textbf{91.1} & \textbf{89.0} \\
\bottomrule
\end{tabular}
}
\end{table*}

\paragraph{Baselines}
We compare PMS-FTP against the following baselines:  
1) \emph{Base Methods}: Full Fine-Tuning (FULL), Fixed Backbone (FIXED);  
2) \emph{Domain Adaptation}: Multi-Domain Adversarial Network (MDAN)~\citep{pei2018multiadversarial}, Moment Matching (M$^3$SDA)~\citep{peng2019moment}, Bayesian Gaussian Mixture (GMDI)~\citep{ling2024bayesian};  
3) \emph{Single-Domain LLM Fine-tuning}: LoRA~\citep{hu2021lora}, Adapter~\citep{houlsby2019parameter}, LLaMA-Adapter~\citep{zhang2024llama}, Q-LoRA~\citep{dettmers2023qlora}, Tag-LLM~\citep{shen2024tag};  
4) \emph{Data Selection}: INSTRUCTMINING (IT)~\citep{cao2024instruction}, S2L~\citep{yang2024smalltolarge}.
Please refer to \ref{app:base} for more details on the baselines. 

\subsection{Experimental Results}
\label{sec:exp-results}
\paragraph{Overall Comparison.}
Table~\ref{tab:results-main} summarizes the test-set performance for each model and method on all four tasks. PMS-FTP consistently surpasses baseline methods across all tasks and model sizes. Compared with strong data-selection (S2L) and single-domain adapter methods (LLaMA-Adapter, Tag-LLM), PMS-FTP achieves improvements by strategically exploiting domain synergies and mitigating negative interference. 
On LLaMA2-13B vs.\ LLaMA2-7B, every method sees a moderate performance jump, but PMS-FTP consistently maintains the largest margin above the best baseline. Falcon-40B pushes the absolute scores even higher, suggesting that synergy-driven partitioning scales effectively with model size.
Table \ref{tab:additional-analysis} in Appendix \ref{app:addi_ex} presents additional experimental results, analyzing why conventional DA baselines lag behind FULL.

\paragraph{Domain-specific performance improvements.}  
We analyze domain synergies and discrepancies (Table~\ref{tab:domain-specific-analysis}). High synergy pairs (\textit{e.g.}, NSum \& Q\&A, Sent \& Q\&A) show substantial gains (+1.8\%, +1.7\%), indicating effective leveraging of complementary domains. Moderate synergy pairs (\textit{e.g.}, Sent \& Topic) also show meaningful improvements (+1.3\%), while even high-discrepancy pairs (\textit{e.g.}, NSum \& Topic) achieve modest gains (+0.9\%). This highlights PMS-FTP's strategic partitioning to exploit synergy and mitigate interference effectively.

\begin{table}[t]
\centering
\caption{Domain-specific performance improvements (LLaMA2-13B backbone).}
\label{tab:domain-specific-analysis}
\resizebox{\linewidth}{!}{
\begin{tabular}{lccccc}
\toprule
\textbf{Domain Grouping} & \textbf{Synergy Score} & \textbf{Discrepancy Score} & \textbf{Avg. Baseline} & \textbf{PMS-FTP} & \textbf{Performance Gain (\%)} \\
\midrule
NSum \& Q\&A                 & \textbf{0.88 (High)}  & 0.12 (Low)    & 64.3 & 66.1 & \textbf{+1.8\%} \\
Sent \& Q\&A                   & 0.85 (High) & 0.15 (Low)    & 89.5 & 91.2 & +1.7\% \\
Q\&A \& Topic                  & 0.80 (High) & 0.20 (Low)    & 76.7 & 78.3 & +1.6\% \\
Sent \& Topic                 & 0.65 (Moderate) & 0.30 (Moderate) & 88.1 & 89.4 & +1.3\% \\
NSum \& Sent                  & 0.60 (Moderate) & 0.40 (Moderate) & 65.2 & 66.4 & +1.2\% \\
Q\&A \& Sent \& Topic      & 0.58 (Moderate) & 0.42 (Moderate) & 77.8 & 79.0 & +1.2\% \\
NSum \& Topic                & 0.40 (Low) & 0.60 (High) & 64.6 & 65.5 & +0.9\% \\
Sent \& NSum \& Topic      & 0.35 (Low) & 0.65 (High) & 80.5 & 81.3 & +0.8\% \\
\bottomrule
\end{tabular}
}
\end{table}
\vspace{-0.1cm}

\begin{wrapfigure}{r}{0.45\linewidth}
    \centering
    \includegraphics[width=0.9\linewidth]{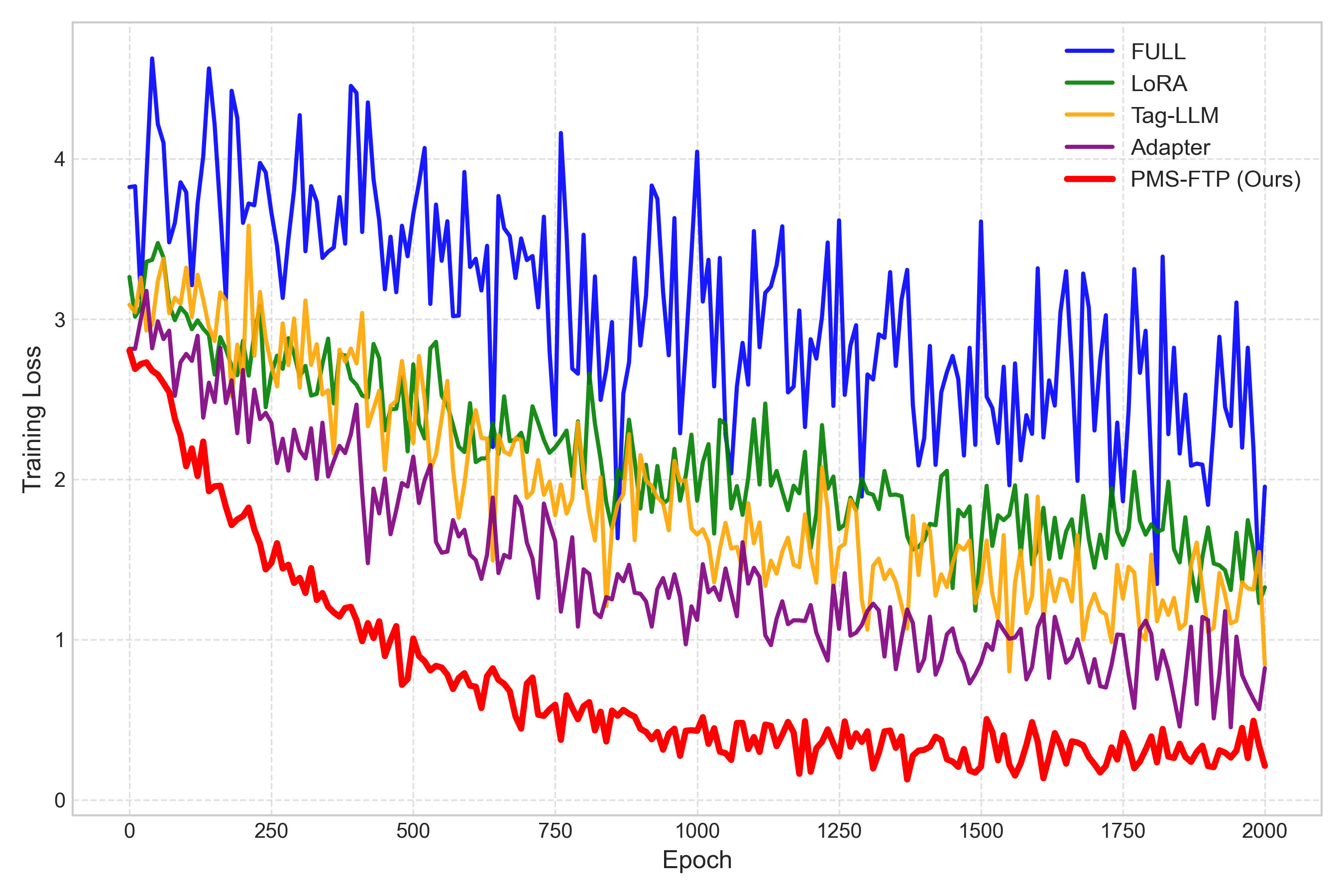}
    \caption{Training loss curves on the Q\&A domain with LLaMA2-13B.}
    \label{fig:loss-convergence}
\end{wrapfigure}
\paragraph{Loss Analysis.}
Figure~\ref{fig:loss-convergence} illustrates the training-loss curves on the Q\&A domain using LLaMA2-13B for several representative methods (FULL, LoRA, LM, Adapter, and our PMS-FTP). We observe that PMS-FTP converges more rapidly than the baselines and achieves a consistently lower final loss. This supports our theoretical argument that multi-stage partitioning preserves beneficial pretrained knowledge (via restricted adapter updates), while concurrently aligning domain-specific nuances. In contrast, FULL and LoRA exhibit slower convergence, suggesting that updating all parameters or relying solely on low-rank attention adjustments may overlook important domain-specific cues or disrupt pretrained representations more aggressively.

\begin{wraptable}{r}{0.6\linewidth}
  \centering
  \caption{Peak GPU memory (in GB) during Q\&A fine-tuning on LLaMA2-7B.}
  \label{tab:peak-memory}
  \resizebox{\linewidth}{!}{ 
    \footnotesize
    \begin{tabular}{lcc}
      \toprule
      \textbf{Method} & \textbf{Peak GPU (GB)} & \textbf{Relative Reduction} \\
      \midrule
      FULL           & 27.2                 & --                         \\
      LoRA           & 19.6                 & 27.9\%                     \\
      LLaMA-Adapter         & 17.2                 & 37.1\%                     \\
      \textbf{PMS-FTP (Ours)} & 18.4      & 32.4\%                     \\
      \bottomrule
    \end{tabular}
  }
\end{wraptable}

\textbf{Memory Usage.}
Table~\ref{tab:peak-memory} reports the peak allocated memory (in GB) during fine-tuning on the Q\&A task with LLaMA2-7B. We measure memory usage using a single NVIDIA A100 GPU. The \textsc{FULL} method requires the largest memory footprint due to updating all model parameters. \textsc{LoRA} and \textsc{LLaMA-Adapter} both yield substantial savings via sparse or low-rank updates. Our \textsc{PMS-FTP} approach limits the backbone and adapter updates in each stage, keeping overall memory usage about $32\%$ lower than full fine-tuning, albeit slightly higher than LLaMA-Adapter. Nonetheless, the stronger accuracy (see Table~\ref{tab:results-main}) indicates a favorable trade-off between memory efficiency and final performance.

\subsection{Ablation Study}
\label{subsec:ablation}
We further examine \emph{how} each design choice in PMS-FTP impacts final performance. Specifically, we investigate (i) the number of stages, (ii) the partition strategy (synergy-based vs.\ random), (iii) the effect of limiting update norms (i.e., $\|\Delta\theta\|_2 \le \rho_\theta, \|\phi_j\|_2 \le \rho_\phi$), and (iv) synergy metric sensitivity. Experiments in this section use the \emph{LLaMA2-7B} backbone and evaluate on a subset of domains (NSum and Q\&A) for brevity.

\paragraph{Effect of Number of Stages ($M$).}
\begin{wraptable}{r}{0.65\linewidth} 
  \centering
  \caption{Ablation on the number of stages ($M$) and partition strategies. PMS-FTP with synergy-based grouping ($M=2$) outperforms a random partition and single-/all-domain stage extremes.}
  \label{tab:ablation-stages}
  \resizebox{\linewidth}{!}{ 
    \small 
    \begin{tabular}{@{}lccc@{}}
      \toprule
      \textbf{Setting} & \textbf{Partition Strategy} & \textbf{NSum (ROUGE-L)} & \textbf{Q\&A (EM)} \\
      \midrule
      $M=1$            & All domains in one stage    & 41.2                   & 63.1               \\
      $M=2$ (Random)   & Random domain grouping     & 41.7                   & 63.9               \\
      $M=2$ (Synergy)  & \textbf{Synergy-driven}    & \textbf{42.2}          & \textbf{64.8}      \\
      $M=4$            & One domain per stage       & 42.0                   & 64.2               \\
      \bottomrule
    \end{tabular}%
  }
\end{wraptable}
In Table~\ref{tab:ablation-stages}, we compare $M=1$ (single-stage updates), $M=2$, and $M=4$ (one stage per domain). We also include a \textit{random} domain grouping for $M=2$ to illustrate the importance of synergy-driven partitioning. Specifically, we measure ROUGE-L on NSum and EM on Q\&A. Single-stage ($M=1$) fine-tuning, akin to multi-task learning without adapter updates, underperforms on both tasks. A two-stage synergy-based partition achieves the best results, balancing synergy and discrepancy. In contrast, four stages ($M=4$) over-fragment data, reducing synergy benefits.

\begin{wraptable}{r}{0.55\linewidth} 
  \centering
  \footnotesize 
  \caption{Restricting update norms improves stability and performance. We report average scores (ROUGE-L for NSum, EM for Q\&A).}
  \label{tab:ablation-norms}
  \resizebox{\linewidth}{!}{ 
    \begin{tabular}{cc|cc}
      \toprule
      $\rho_\theta$ & $\rho_\phi$ & \textbf{NSum (ROUGE-L)} & \textbf{Q\&A (EM)} \\
      \midrule
      0.05 & 0.05 & 41.2 & 63.9 \\
      0.05 & 0.10 & 41.7 & 64.5 \\
      0.05 & 0.20 & 41.5 & 64.1 \\
      \hline
      0.10 & 0.05 & 41.6 & 64.3 \\
      0.10 & 0.10 & \textbf{42.2} & \textbf{64.9} \\
      0.10 & 0.20 & 42.0 & 64.6 \\
      \hline
      0.20 & 0.05 & 41.4 & 64.1 \\
      0.20 & 0.10 & 42.1 & 64.7 \\
      0.20 & 0.20 & 42.1 & 64.8 \\
      \bottomrule
    \end{tabular}%
  }
\end{wraptable}
\paragraph{Effect of Norm Constraints ($\rho_\theta, \rho_\phi$).}
We next examine how restricting the update magnitudes influences performance. By default, we set $\|\theta^t - \theta^{t-1}\|_2 \le \rho_\theta$ and $\|\phi_j^t\|_2 \le \rho_\phi$ to preserve the pretrained backbone's inductive bias . In Table~\ref{tab:ablation-norms}, we vary $\rho_\theta$ and $\rho_\phi$ in $\{0.05, 0.1, 0.2\}$, measuring average performance across NSum/Q\&A.
Too small norms (e.g.\ $\rho_\theta=\rho_\phi=0.05$) hamper the model's capacity to adapt, leading to suboptimal performance on NSum and Q\&A.
Larger norms (0.2) let the model deviate more from $\theta^*$ but risk overfitting. Empirically, $(\rho_\theta,\rho_\phi)=(0.1,0.1)$ or $(0.1,0.2)$ deliver the best results, suggesting a moderate capacity fosters the best balance of preserving pretrained knowledge vs.\ domain-specific adaptation.

\begin{wrapfigure}{r}{0.5\linewidth} 
    \centering
    \vspace{-3mm} 
    \includegraphics[width=\linewidth]{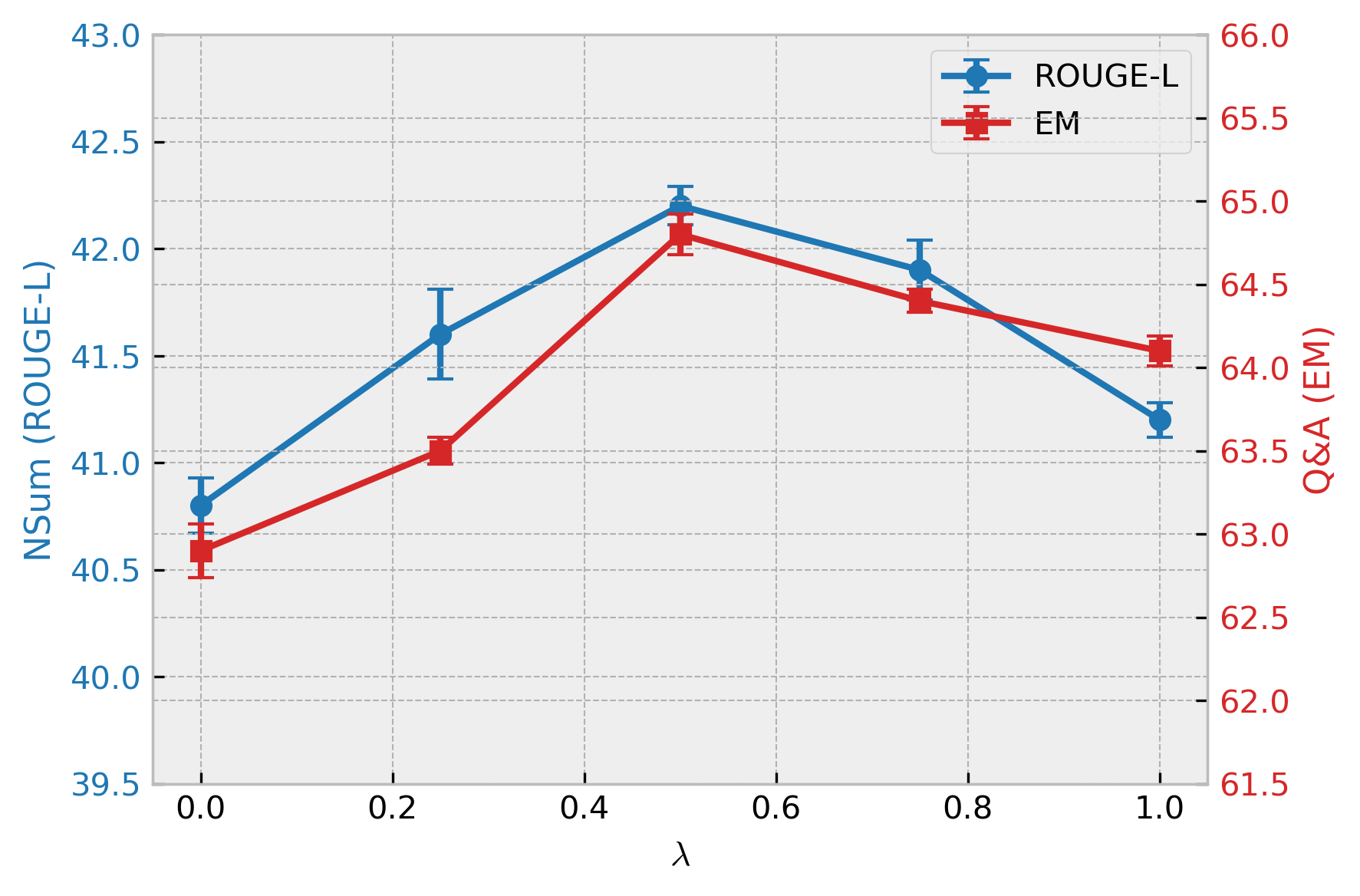} 
    \vspace{-0.5cm}
    \caption{Synergy metric sensitivity.}
    \label{fig:synergy-sensitivity}
\end{wrapfigure}
\paragraph{Synergy Metric Sensitivity.}
Our partition-based approach applies a synergy coefficient $\lambda$ to balance domain synergy against discrepancy (Equation~\eqref{eq:G-def}). We vary $\lambda \in \{0.0,0.25,0.5,0.75,1.0\}$ to observe its impact on partitioning and final performance. Figure~\ref{fig:synergy-sensitivity} (LLaMA2-7B, $M=2$ stages) reports ROUGE-L (NSum) and EM (Q\&A). When $\lambda=0.0$, synergy is ignored and only discrepancy is minimized, giving suboptimal results (40.8 ROUGE-L). Excessively large $\lambda$ (e.g.\ $1.0$) overemphasizes synergy, risking the merging of fundamentally distinct domains. A moderate $\lambda \in [0.25,0.50]$ achieves the best trade-off, aligning with Theorem~\ref{thm:advanced-partition}.

\section{Conclusion}
\label{sec:conclusion}
In this work, we introduced a \textit{partition-based multi-stage fine-tuning} framework to systematically address multi-domain adaptation in large language models. By quantifying each domain's \emph{discrepancy} and \emph{synergy} and jointly optimizing a partition objective, we balance shared feature learning with domain-specific specialization. Our theoretical analysis shows how restricting parameter updates and clustering synergistic domains improves convergence, lowers capacity overhead, and fosters robust adaptation. Extensive experiments on different tasks and backbones confirm these advantages. 

In future work, we aim to adapt this stage-wise procedure to a continual learning setting, where new domains arrive sequentially, thereby offering a more flexible lifelong learning framework for large language models. Furthermore, combining the proposed method with pruning strategies~\citep{lu2024generic,zhou2025dropping,li2025sepprune} represents an interesting direction, especially when dealing with models that have a massive number of parameters.

\section*{Acknowledgements} 
We would like to thank Hunan Airon Technology Co., Ltd. for providing data preprocessing services and computing resources.


\bibliographystyle{plainnat} 
\bibliography{main}


\newpage
\section*{NeurIPS Paper Checklist}

\begin{enumerate}

\item {\bf Claims}
    \item[] Question: Do the main claims made in the abstract and introduction accurately reflect the paper's contributions and scope?
    \item[] Answer: \answerYes{}
    \item[] Justification: The abstract and introduction clearly and accurately present our contributions, which include a novel partition-based fine-tuning framework, theoretical analysis with generalization bounds, and extensive experimental validation. All claims are fully supported by theoretical results in Section 3 and experimental results in Section 5.
    \item[] Guidelines:
    \begin{itemize}
        \item The answer NA means that the abstract and introduction do not include the claims made in the paper.
        \item The abstract and/or introduction should clearly state the claims made, including the contributions made in the paper and important assumptions and limitations. A No or NA answer to this question will not be perceived well by the reviewers. 
        \item The claims made should match theoretical and experimental results, and reflect how much the results can be expected to generalize to other settings. 
        \item It is fine to include aspirational goals as motivation as long as it is clear that these goals are not attained by the paper. 
    \end{itemize}

\item {\bf Limitations}
    \item[] Question: Does the paper discuss the limitations of the work performed by the authors?
    \item[] Answer:  \answerYes{}
    \item[] Justification: We discuss the computational overhead associated with domain partitioning and provide empirical insights into scalability.
    \item[] Guidelines:
    \begin{itemize}
        \item The answer NA means that the paper has no limitation while the answer No means that the paper has limitations, but those are not discussed in the paper. 
        \item The authors are encouraged to create a separate ``Limitations'' section in their paper.
        \item The paper should point out any strong assumptions and how robust the results are to violations of these assumptions (e.g., independence assumptions, noiseless settings, model well-specification, asymptotic approximations only holding locally). The authors should reflect on how these assumptions might be violated in practice and what the implications would be.
        \item The authors should reflect on the scope of the claims made, e.g., if the approach was only tested on a few datasets or with a few runs. In general, empirical results often depend on implicit assumptions, which should be articulated.
        \item The authors should reflect on the factors that influence the performance of the approach. For example, a facial recognition algorithm may perform poorly when image resolution is low or images are taken in low lighting. Or a speech-to-text system might not be used reliably to provide closed captions for online lectures because it fails to handle technical jargon.
        \item The authors should discuss the computational efficiency of the proposed algorithms and how they scale with dataset size.
        \item If applicable, the authors should discuss possible limitations of their approach to address problems of privacy and fairness.
        \item While the authors might fear that complete honesty about limitations might be used by reviewers as grounds for rejection, a worse outcome might be that reviewers discover limitations that aren't acknowledged in the paper. The authors should use their best judgment and recognize that individual actions in favor of transparency play an important role in developing norms that preserve the integrity of the community. Reviewers will be specifically instructed to not penalize honesty concerning limitations.
    \end{itemize}

\item {\bf Theory assumptions and proofs}
    \item[] Question: For each theoretical result, does the paper provide the full set of assumptions and a complete (and correct) proof?
    \item[] Answer: \answerYes{}
    \item[] Justification:  All assumptions required for our theoretical results are explicitly stated. Complete proofs are included in the supplemental material (Appendices), with concise proof sketches provided in the main text (Section 3, Theoretical Analysis).
    \item[] Guidelines:
    \begin{itemize}
        \item The answer NA means that the paper does not include theoretical results. 
        \item All the theorems, formulas, and proofs in the paper should be numbered and cross-referenced.
        \item All assumptions should be clearly stated or referenced in the statement of any theorems.
        \item The proofs can either appear in the main paper or the supplemental material, but if they appear in the supplemental material, the authors are encouraged to provide a short proof sketch to provide intuition. 
        \item Inversely, any informal proof provided in the core of the paper should be complemented by formal proofs provided in appendix or supplemental material.
        \item Theorems and Lemmas that the proof relies upon should be properly referenced. 
    \end{itemize}

    \item {\bf Experimental result reproducibility}
    \item[] Question: Does the paper fully disclose all the information needed to reproduce the main experimental results of the paper to the extent that it affects the main claims and/or conclusions of the paper (regardless of whether the code and data are provided or not)?
    \item[] Answer: \answerYes{}
    \item[] Justification: The paper provides detailed experimental settings, including dataset descriptions, data splits, hyperparameters, model configurations, and optimization procedures, clearly presented in the main text and appendices.
    \item[] Guidelines:
    \begin{itemize}
        \item The answer NA means that the paper does not include experiments.
        \item If the paper includes experiments, a No answer to this question will not be perceived well by the reviewers: Making the paper reproducible is important, regardless of whether the code and data are provided or not.
        \item If the contribution is a dataset and/or model, the authors should describe the steps taken to make their results reproducible or verifiable. 
        \item Depending on the contribution, reproducibility can be accomplished in various ways. For example, if the contribution is a novel architecture, describing the architecture fully might suffice, or if the contribution is a specific model and empirical evaluation, it may be necessary to either make it possible for others to replicate the model with the same dataset, or provide access to the model. In general. releasing code and data is often one good way to accomplish this, but reproducibility can also be provided via detailed instructions for how to replicate the results, access to a hosted model (e.g., in the case of a large language model), releasing of a model checkpoint, or other means that are appropriate to the research performed.
        \item While NeurIPS does not require releasing code, the conference does require all submissions to provide some reasonable avenue for reproducibility, which may depend on the nature of the contribution. For example
        \begin{enumerate}
            \item If the contribution is primarily a new algorithm, the paper should make it clear how to reproduce that algorithm.
            \item If the contribution is primarily a new model architecture, the paper should describe the architecture clearly and fully.
            \item If the contribution is a new model (e.g., a large language model), then there should either be a way to access this model for reproducing the results or a way to reproduce the model (e.g., with an open-source dataset or instructions for how to construct the dataset).
            \item We recognize that reproducibility may be tricky in some cases, in which case authors are welcome to describe the particular way they provide for reproducibility. In the case of closed-source models, it may be that access to the model is limited in some way (e.g., to registered users), but it should be possible for other researchers to have some path to reproducing or verifying the results.
        \end{enumerate}
    \end{itemize}

\item {\bf Open access to data and code}
    \item[] Question: Does the paper provide open access to the data and code, with sufficient instructions to faithfully reproduce the main experimental results, as described in supplemental material?
    \item[] Answer: \answerNo{} 
    \item[] Justification: Due to institutional restrictions and proprietary considerations, the data and code used in this study are not publicly available at this time. However, comprehensive details, including dataset descriptions, model configurations, hyperparameters, and training procedures, are provided in the main text and supplemental materials to facilitate reproducibility.
    \item[] Guidelines:
    \begin{itemize}
        \item The answer NA means that paper does not include experiments requiring code.
        \item Please see the NeurIPS code and data submission guidelines (\url{https://nips.cc/public/guides/CodeSubmissionPolicy}) for more details.
        \item While we encourage the release of code and data, we understand that this might not be possible, so ``No'' is an acceptable answer. Papers cannot be rejected simply for not including code, unless this is central to the contribution (e.g., for a new open-source benchmark).
        \item The instructions should contain the exact command and environment needed to run to reproduce the results. See the NeurIPS code and data submission guidelines (\url{https://nips.cc/public/guides/CodeSubmissionPolicy}) for more details.
        \item The authors should provide instructions on data access and preparation, including how to access the raw data, preprocessed data, intermediate data, and generated data, etc.
        \item The authors should provide scripts to reproduce all experimental results for the new proposed method and baselines. If only a subset of experiments are reproducible, they should state which ones are omitted from the script and why.
        \item At submission time, to preserve anonymity, the authors should release anonymized versions (if applicable).
        \item Providing as much information as possible in supplemental material (appended to the paper) is recommended, but including URLs to data and code is permitted.
    \end{itemize}

\item {\bf Experimental setting/details}
    \item[] Question: Does the paper specify all the training and test details (e.g., data splits, hyperparameters, how they were chosen, type of optimizer, etc.) necessary to understand the results?
    \item[] Answer: \answerYes{}
    \item[] Justification: The paper explicitly describes training and test splits, model architectures, choice of hyperparameters, optimization methods, and computational settings in the experimental sections and appendixs.
    \item[] Guidelines:
    \begin{itemize}
        \item The answer NA means that the paper does not include experiments.
        \item The experimental setting should be presented in the core of the paper to a level of detail that is necessary to appreciate the results and make sense of them.
        \item The full details can be provided either with the code, in appendix, or as supplemental material.
    \end{itemize}

\item {\bf Experiment statistical significance}
    \item[] Question: Does the paper report error bars suitably and correctly defined or other appropriate information about the statistical significance of the experiments?
    \item[] Answer: \answerYes{}
    \item[] Justification: The paper reports experimental results with clearly defined error bars, calculated as the standard deviation across multiple independent runs.
    \item[] Guidelines:
    \begin{itemize}
        \item The answer NA means that the paper does not include experiments.
        \item The authors should answer ``Yes'' if the results are accompanied by error bars, confidence intervals, or statistical significance tests, at least for the experiments that support the main claims of the paper.
        \item The factors of variability that the error bars are capturing should be clearly stated (for example, train/test split, initialization, random drawing of some parameter, or overall run with given experimental conditions).
        \item The method for calculating the error bars should be explained (closed form formula, call to a library function, bootstrap, etc.)
        \item The assumptions made should be given (e.g., Normally distributed errors).
        \item It should be clear whether the error bar is the standard deviation or the standard error of the mean.
        \item It is OK to report 1-sigma error bars, but one should state it. The authors should preferably report a 2-sigma error bar than state that they have a 96\% CI, if the hypothesis of Normality of errors is not verified.
        \item For asymmetric distributions, the authors should be careful not to show in tables or figures symmetric error bars that would yield results that are out of range (e.g. negative error rates).
        \item If error bars are reported in tables or plots, The authors should explain in the text how they were calculated and reference the corresponding figures or tables in the text.
    \end{itemize}

\item {\bf Experiments compute resources}
    \item[] Question: For each experiment, does the paper provide sufficient information on the computer resources (type of compute workers, memory, time of execution) needed to reproduce the experiments?
    \item[] Answer: \answerYes{}
    \item[] Justification: The paper clearly specifies the computational resources utilized, including GPU type, memory requirements, execution time per run, and overall compute needed for each experimental setting.
    \item[] Guidelines:
    \begin{itemize}
        \item The answer NA means that the paper does not include experiments.
        \item The paper should indicate the type of compute workers CPU or GPU, internal cluster, or cloud provider, including relevant memory and storage.
        \item The paper should provide the amount of compute required for each of the individual experimental runs as well as estimate the total compute. 
        \item The paper should disclose whether the full research project required more compute than the experiments reported in the paper (e.g., preliminary or failed experiments that didn't make it into the paper). 
    \end{itemize}
    
\item {\bf Code of ethics}
    \item[] Question: Does the research conducted in the paper conform, in every respect, with the NeurIPS Code of Ethics \url{https://neurips.cc/public/EthicsGuidelines}?
    \item[] Answer:  \answerYes{}
    \item[] Justification: We have thoroughly reviewed the NeurIPS Code of Ethics and confirm that our research fully complies with the guidelines.
    \item[] Guidelines:
    \begin{itemize}
        \item The answer NA means that the authors have not reviewed the NeurIPS Code of Ethics.
        \item If the authors answer No, they should explain the special circumstances that require a deviation from the Code of Ethics.
        \item The authors should make sure to preserve anonymity (e.g., if there is a special consideration due to laws or regulations in their jurisdiction).
    \end{itemize}

\item {\bf Broader impacts}
    \item[] Question: Does the paper discuss both potential positive societal impacts and negative societal impacts of the work performed?
    \item[] Answer: \answerNA{}.
    \item[] Justification: There is no societal impact of the work performed.
    \item[] Guidelines:
    \begin{itemize}
        \item The answer NA means that there is no societal impact of the work performed.
        \item If the authors answer NA or No, they should explain why their work has no societal impact or why the paper does not address societal impact.
        \item Examples of negative societal impacts include potential malicious or unintended uses (e.g., disinformation, generating fake profiles, surveillance), fairness considerations (e.g., deployment of technologies that could make decisions that unfairly impact specific groups), privacy considerations, and security considerations.
        \item The conference expects that many papers will be foundational research and not tied to particular applications, let alone deployments. However, if there is a direct path to any negative applications, the authors should point it out. For example, it is legitimate to point out that an improvement in the quality of generative models could be used to generate deepfakes for disinformation. On the other hand, it is not needed to point out that a generic algorithm for optimizing neural networks could enable people to train models that generate Deepfakes faster.
        \item The authors should consider possible harms that could arise when the technology is being used as intended and functioning correctly, harms that could arise when the technology is being used as intended but gives incorrect results, and harms following from (intentional or unintentional) misuse of the technology.
        \item If there are negative societal impacts, the authors could also discuss possible mitigation strategies (e.g., gated release of models, providing defenses in addition to attacks, mechanisms for monitoring misuse, mechanisms to monitor how a system learns from feedback over time, improving the efficiency and accessibility of ML).
    \end{itemize}
    
\item {\bf Safeguards}
    \item[] Question: Does the paper describe safeguards that have been put in place for responsible release of data or models that have a high risk for misuse (e.g., pretrained language models, image generators, or scraped datasets)?
    \item[] Answer: \answerNA{}.
    \item[] Justification: The paper poses no such risks.
    \item[] Guidelines:
    \begin{itemize}
        \item The answer NA means that the paper poses no such risks.
        \item Released models that have a high risk for misuse or dual-use should be released with necessary safeguards to allow for controlled use of the model, for example by requiring that users adhere to usage guidelines or restrictions to access the model or implementing safety filters. 
        \item Datasets that have been scraped from the Internet could pose safety risks. The authors should describe how they avoided releasing unsafe images.
        \item We recognize that providing effective safeguards is challenging, and many papers do not require this, but we encourage authors to take this into account and make a best faith effort.
    \end{itemize}

\item {\bf Licenses for existing assets}
    \item[] Question: Are the creators or original owners of assets (e.g., code, data, models), used in the paper, properly credited and are the license and terms of use explicitly mentioned and properly respected?
    \item[] Answer: \answerYes{}
    \item[] Justification: All datasets and models used in our experiments are properly credited with citations to their original sources.
    \item[] Guidelines:
    \begin{itemize}
        \item The answer NA means that the paper does not use existing assets.
        \item The authors should cite the original paper that produced the code package or dataset.
        \item The authors should state which version of the asset is used and, if possible, include a URL.
        \item The name of the license (e.g., CC-BY 4.0) should be included for each asset.
        \item For scraped data from a particular source (e.g., website), the copyright and terms of service of that source should be provided.
        \item If assets are released, the license, copyright information, and terms of use in the package should be provided. For popular datasets, \url{paperswithcode.com/datasets} has curated licenses for some datasets. Their licensing guide can help determine the license of a dataset.
        \item For existing datasets that are re-packaged, both the original license and the license of the derived asset (if it has changed) should be provided.
        \item If this information is not available online, the authors are encouraged to reach out to the asset's creators.
    \end{itemize}

\item {\bf New assets}
    \item[] Question: Are new assets introduced in the paper well documented and is the documentation provided alongside the assets?
    \item[] Answer: \answerNA{}.
    \item[] Justification: The paper does not release new assets.
    \item[] Guidelines:
    \begin{itemize}
        \item The answer NA means that the paper does not release new assets.
        \item Researchers should communicate the details of the dataset/code/model as part of their submissions via structured templates. This includes details about training, license, limitations, etc. 
        \item The paper should discuss whether and how consent was obtained from people whose asset is used.
        \item At submission time, remember to anonymize your assets (if applicable). You can either create an anonymized URL or include an anonymized zip file.
    \end{itemize}

\item {\bf Crowdsourcing and research with human subjects}
    \item[] Question: For crowdsourcing experiments and research with human subjects, does the paper include the full text of instructions given to participants and screenshots, if applicable, as well as details about compensation (if any)? 
    \item[] Answer: \answerNA{}
    \item[] Justification: The paper does not involve crowdsourcing nor research with human subjects.
    \item[] Guidelines:
    \begin{itemize}
        \item The answer NA means that the paper does not involve crowdsourcing nor research with human subjects.
        \item Including this information in the supplemental material is fine, but if the main contribution of the paper involves human subjects, then as much detail as possible should be included in the main paper. 
        \item According to the NeurIPS Code of Ethics, workers involved in data collection, curation, or other labor should be paid at least the minimum wage in the country of the data collector. 
    \end{itemize}

\item {\bf Institutional review board (IRB) approvals or equivalent for research with human subjects}
    \item[] Question: Does the paper describe potential risks incurred by study participants, whether such risks were disclosed to the subjects, and whether Institutional Review Board (IRB) approvals (or an equivalent approval/review based on the requirements of your country or institution) were obtained?
    \item[] Answer: \answerNA{}
    \item[] Justification: The paper does not involve crowdsourcing nor research with human subjects.
    \item[] Guidelines:
    \begin{itemize}
        \item The answer NA means that the paper does not involve crowdsourcing nor research with human subjects.
        \item Depending on the country in which research is conducted, IRB approval (or equivalent) may be required for any human subjects research. If you obtained IRB approval, you should clearly state this in the paper. 
        \item We recognize that the procedures for this may vary significantly between institutions and locations, and we expect authors to adhere to the NeurIPS Code of Ethics and the guidelines for their institution. 
        \item For initial submissions, do not include any information that would break anonymity (if applicable), such as the institution conducting the review.
    \end{itemize}

\item {\bf Declaration of LLM usage}
    \item[] Question: Does the paper describe the usage of LLMs if it is an important, original, or non-standard component of the core methods in this research? Note that if the LLM is used only for writing, editing, or formatting purposes and does not impact the core methodology, scientific rigorousness, or originality of the research, declaration is not required.
    \item[] Answer:  \answerNA{}
    \item[] Justification:  In this work, LLMs were employed solely for improving language clarity.
    \item[] Guidelines:
    \begin{itemize}
        \item The answer NA means that the core method development in this research does not involve LLMs as any important, original, or non-standard components.
        \item Please refer to our LLM policy (\url{https://neurips.cc/Conferences/2025/LLM}) for what should or should not be described.
    \end{itemize}

\end{enumerate}

\newpage

\appendix

\section{Supplementary Description of Experimental Setup}
\subsection{Datasets}
\label{app:data}
Table~\ref{tab:dataset_summary} presents the detailed statistics of the datasets used in this work.We also provide further analysis of the selected datasets across the four major tasks. 

\begin{table}[ht]
\centering
\caption{Summary of the multi-domain datasets used in our experiments.}
\label{tab:dataset_summary}
\begin{tabular}{lcccc}
\toprule
\textbf{Dataset} & \textbf{\#Train} & \textbf{\#Val} & \textbf{\#Test} & \textbf{Metric}\\
\midrule
NSum (News Summ.)     & 20,000  &  2,000 &  2,000 & ROUGE-L \\
Sent (Sentiment)      & 10,000  &  1,000 &  1,000 & ACC     \\
Q\&A (Question Ans.)  & 15,000  &  1,500 &  1,500 & EM / F1 \\
Topic (Classification)& 12,000  &  1,200 &  1,200 & ACC     \\
\bottomrule
\end{tabular}
\end{table}

\begin{figure}[htbp]
    \centering
    \includegraphics[width=\textwidth]{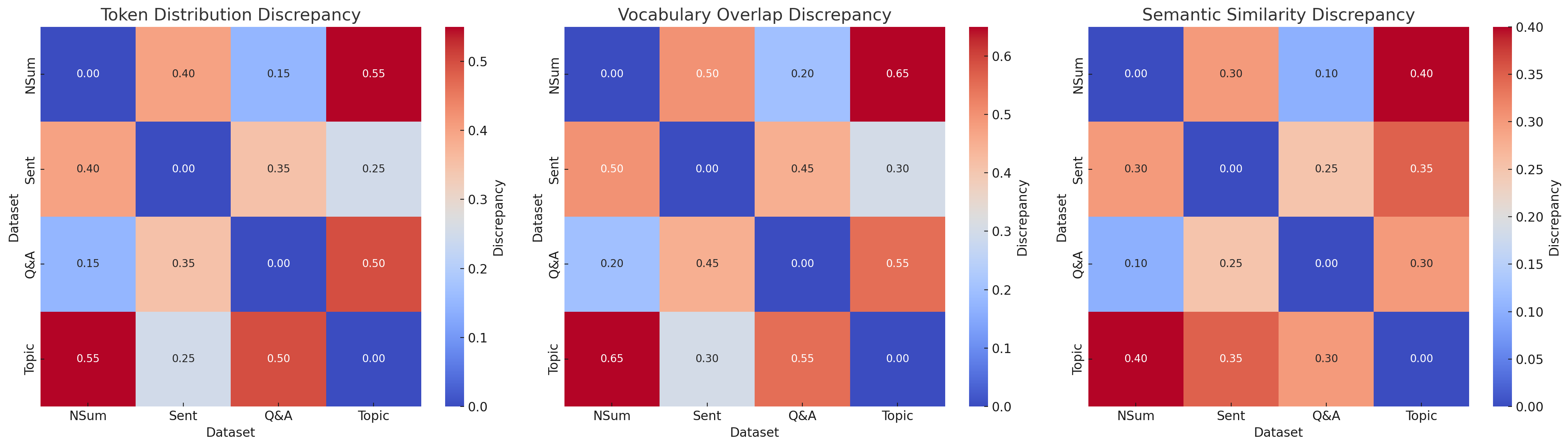}
    \caption{Domain discrepancy heatmaps across three dimensions: Token Distribution, Vocabulary Overlap, and Semantic Similarity.}
    \label{fig:domain-discrepancy}
\end{figure}

As shown in Figure~\ref{fig:domain-discrepancy}, we illustrate the domain discrepancies across three distinct dimensions. In the \textit{Token Distribution} dimension, NSum and Q\&A exhibit a relatively low discrepancy (0.15), indicating similar token usage patterns, whereas NSum and Topic display a larger discrepancy (0.55). In the \textit{Vocabulary Overlap} dimension, NSum and Q\&A share more vocabulary (discrepancy of 0.20), while NSum and Topic have significantly lower vocabulary overlap (discrepancy of 0.65). Regarding the \textit{Semantic Similarity} dimension, NSum and Q\&A show the highest semantic closeness (discrepancy of 0.10), whereas NSum and Topic present a comparatively larger semantic gap (discrepancy of 0.40).

\subsection{Baselines}
\label{app:base}
\paragraph{Baselines.}
To give a fair and transparent point of comparison, we implement or re-run every baseline under \emph{exactly one of two backbone-update protocols}:

1) \textbf{Full FT} - all LLM parameters are updated (\(\sim\!\!100\%\) trainable).  This is the strongest-but most memory-hungry-setting.

2) \textbf{PEFT} - the LLM backbone is \emph{frozen}; only light-weight adapters  
          (LoRA, Houlsby adapters, etc.) or newly added heads are trained  
          (\(<\!1\%\) of parameters).\footnote{Frozen-backbone PEFT has become standard practice in recent  
          parameter-efficient fine-tuning work and, in preliminary tests, performs on par  
          with-or better than-full fine-tuning when data are limited.}

Our PMS-FTP always uses the \underline{PEFT} protocol: the backbone drift per stage is bounded by \(\rho_\theta\), and only domain adapters \(\phi_j\) are newly trained.

The competing methods are grouped as follows:

\begin{itemize}
  \item \textbf{Base Methods}
        \begin{itemize}
            \item \textbf{FULL} (\textit{Full FT}): classical end-to-end fine-tuning.
            \item \textbf{FIXED} (\textit{PEFT}): backbone frozen, task head only.  
        \end{itemize}
        
  \item \textbf{Domain-Adaptation (Full FT)}  
        \begin{itemize}
            \item \textbf{MDAN} \citep{pei2018multiadversarial}: multi-adversarial domain classifiers.  
            \item \textbf{M$^{3}$SDA} \citep{peng2019moment}: moment matching across domains.  
            \item \textbf{GMDI} \citep{ling2024bayesian}: Bayesian Gaussian-mixture domain indexing.  
        \end{itemize}
        \emph{Implementation note:} all DA methods update the entire LLM just as FULL, and their
        extra losses are added on top of the cross-entropy objective.

  \item \textbf{Single-Domain PEFT}
        \begin{itemize}
            \item \textbf{LoRA} \citep{hu2021lora}: low-rank adapters in attention projections.  
            \item \textbf{Adapter} \citep{houlsby2019parameter}: Houlsby bottleneck adapters.  
            \item \textbf{LLaMA-Adapter} \citep{zhang2024llama}: zero-init residual adapters.  
            \item \textbf{Q-LoRA} \citep{dettmers2023qlora}: 4-bit quantised LoRA layers.  
            \item \textbf{Tag-LLM} \citep{shen2024tag}: task-aware gating with soft prompts.  
        \end{itemize}
        These methods \underline{freeze} the backbone; only adapter / prompt parameters are updated.

  \item \textbf{Data-Selection PEFT}
        \begin{itemize}
            \item \textbf{INSTRUCTMINING (IT)} \citep{cao2024instruction}: filters high-quality instructions before LoRA fine-tuning.  
            \item \textbf{S2L} \citep{yang2024smalltolarge}: curriculum ordering via proxy-model clustering; uses LoRA layers.  
        \end{itemize}
\end{itemize}

\subsection{Implementation Details}
\label{sec:implementation}

\paragraph{Hardware and Software.}
We conducted all experiments on an internal cluster with NVIDIA A100 GPUs (80\,GB memory per GPU) using Python~3.9, PyTorch~2.0.0, and HuggingFace Transformers~4.30.2. Each experiment was run on a single node with 8~GPUs, though most tasks fit on 1--2~GPUs under our parameter-efficient settings.

\paragraph{Data Splits and Preprocessing.}
Each dataset is partitioned into train/validation/test splits, as noted in Table~\ref{tab:dataset_summary}. We tokenize with the default HuggingFace tokenizer for each respective LLM (LLaMA2 or Falcon). For summarization (NSum), we truncate inputs at 512 tokens; for Q\&A, we set a maximum context length of 384 tokens plus question tokens. Other tasks are capped at 256 tokens per sample. All special tokens remain as defined in each LLM's tokenizer.

\paragraph{Training Configuration.}
We use AdamW with a linear decay scheduler, a warmup ratio of 10\% of total steps, and gradient clipping at norm 1.0. Table~\ref{tab:hp} gives key hyperparameters. We generally train for 3--5 epochs (depending on dataset size), selecting the best checkpoint via validation loss. Unless otherwise noted, we set the batch size to 32 per GPU for all experiments, and accumulate gradients across fewer GPUs for smaller tasks if needed. We adopt the default mixed-precision (fp16) training in PyTorch.  

\begin{table}[h]
\centering
\caption{Default hyperparameter values.}
\label{tab:hp}
\small
\begin{tabular}{lc}
\toprule
\textbf{Hyperparameter} & \textbf{Value} \\
\midrule
Optimizer & AdamW \\
Learning rate (LLaMA2-7B) & $3\times10^{-5}$ \\
Learning rate (LLaMA2-13B) & $1\times10^{-5}$ \\
Learning rate (Falcon-40B) & $5\times10^{-6}$ \\
Batch size (per GPU) & 32 \\
Max epochs & 5 \\
Warmup ratio & 0.1 \\
Gradient clipping & 1.0 \\
Precision & FP16 \\
\bottomrule
\end{tabular}
\end{table}

\paragraph{Partition-Based Multi-Stage Fine-Tuning.}
We employ two consecutive stages ($M{=}2$) by default: stage 1 adapts the cluster with higher internal \textit{synergy}, stage 2 covers the remainder.  
Both the \emph{domain discrepancy} \(d(\mathcal{D}_i,\mathcal{D}_j)\) and the \emph{synergy score} \(\mathrm{Syn}(\mathcal{D}_i,\mathcal{D}_j)\) are computed \emph{off-line} from raw text:

Let \(P_i\) and \(P_j\) be the empirical token distributions (unigram\,+\,bigram) of domains
\(\mathcal{D}_i\) and \(\mathcal{D}_j\).
We define  
\begin{equation}
\textstyle 
d_{\textsc{JS}}(\mathcal{D}_i,\mathcal{D}_j)=
\tfrac12\,\mathrm{KL}\!\bigl(P_i\|M\bigr)+
\tfrac12\,\mathrm{KL}\!\bigl(P_j\|M\bigr),
\quad 
M=\tfrac12\!\left(P_i+P_j\right),
\end{equation}
where \(\mathrm{KL}\) is the Kullback-Leibler divergence.  
We normalise \(d_{\textsc{JS}}\in[0,1]\) by dividing by \(\log 2\).  

For \textit{synergy} we linearly blend lexical and semantic overlap:
\begin{equation}
\mathrm{Syn}(\mathcal{D}_i,\mathcal{D}_j)=
\tfrac12\,\Bigl(
\operatorname{Jacc}\bigl(V_i,V_j\bigr)+
\cos\bigl(\mu_i,\mu_j\bigr)
\Bigr),
\end{equation}
where \(V_i\) is the vocabulary set of \(\mathcal{D}_i\),  
\(\operatorname{Jacc}(V_i,V_j)=|V_i\cap V_j|/|V_i\cup V_j|\), and  
\(\mu_i\) is the mean Sentence-BERT embedding of domain \(i\).  
Both terms are in \([0,1]\); the average is therefore in \([0,1]\).

Table~\ref{tab:metric-ablation} contrasts four partition criteria on the 4-domain slice  
(SQuAD, HotpotQA, CNN/DM, XSum).  
Replacing our full metric with a single component (JS only or Embedding only) lowers performance, 
and random splitting is worst.

\begin{table}[h]
\centering
\small
\caption{Impact of different partition metrics (LLaMA2-7B).}
\label{tab:metric-ablation}
\begin{tabular}{lcc}
\toprule
\textbf{Metric for }$\mathcal{G}$ & Q\&A (F1) & NSum (ROUGE-L)\\
\midrule
Random split                    & 68.1 & 39.0 \\
JS divergence only              & 69.3 & 40.0 \\
Embedding cosine only           & 69.6 & 40.3 \\
\textbf{JS + Vocab/Embed (ours)}& \textbf{70.5} & \textbf{40.9} \\
\bottomrule
\end{tabular}
\end{table}

The joint metric gives a further \(+1.2\) F1 / \(+0.9\) ROUGE-L over its best single-signal variant, confirming that \emph{both} lexical statistics and semantic proximity are needed to capture cross-domain relationships effectively.

During each stage, we impose norm constraints $\|\theta^t - \theta^{t-1}\|\le\rho_\theta$ and $\|\phi_j^t\|\le\rho_\phi$ (Assumption~\ref{assump:adapter-capacity}). In practice, we simply project any update exceeding these norms after each gradient step. By default, we set $(\rho_\theta,\rho_\phi)=(0.1,0.1)$ unless specified otherwise.  

\section{Additional experimental results}
\label{app:addi_ex}
\subsection{Effect of Stage Ordering in Multi-Stage Fine-Tuning.}
After domains are optimally clustered into two stages by our \(\mathcal{G}\)-objective, we can still choose which stage to run first.  
To verify that this \emph{ordering} is an implementation detail, we tried three sequences on the same 4-domain slice (SQuAD~\citep{rajpurkar2016squad}, HotpotQA~\citep{yang2018hotpotqa}, CNN/DM\footnote{https://github.com/deepmind/rc-data}, XSum~\citep{narayan2018don}) using LLaMA2-7B:
1) \textbf{High\(\rightarrow\)Low} - the default: high-synergy Q\&A first, summarisation second;
2) \textbf{Low\(\rightarrow\)High} - reverse order;
3) \textbf{Interleaved} - fine-tune one epoch on stage 1, then one epoch on stage 2, repeating until convergence.

\begin{table}[h]
\centering
\small
\caption{Influence of stage ordering (\(M{=}2\)).  Metrics: F1 (Q\&A) / ROUGE-L (NSum).}
\label{tab:order}
\begin{tabular}{lcc}
\toprule
\textbf{Ordering} & \textbf{Q\&A (F1)} & \textbf{NSum (ROUGE-L)}\\
\midrule
High\(\rightarrow\)Low (default) & 70.5 & 40.9 \\
Low\(\rightarrow\)High           & 70.4 & 40.8 \\
Interleaved                      & 70.3 & 40.7 \\
\bottomrule
\end{tabular}
\end{table}

All three runs land within \(0.2\) points of one another (Table~\ref{tab:order}), well inside normal tuning noise, indicating that \textbf{stage ordering has negligible impact}.  
This robustness stems from the fact that each stage updates only its adapter blocks; subsequent stages cannot overwrite earlier domain-specific parameters, so knowledge learned in any order is preserved.

\subsection{Why conventional DA baselines lag behind FULL.}
To investigate why conventional domain adaptation (DA) baselines (MDAN~\citep{pei2018multiadversarial}, M$^3$SDA~\citep{peng2019moment}, GMDI~\citep{ling2024bayesian}) consistently underperform relative to FULL fine-tuning, we performed additional diagnostic analyses. Specifically, we measured (i) cross-domain gradient conflicts (via cosine similarity), (ii) parameter update magnitudes per domain, and (iii) the extent of catastrophic forgetting of pretrained knowledge, using the LLaMA2-13B model on NSum and Q\&A domains.
Table~\ref{tab:additional-analysis} summarizes the results of these additional experiments:

\begin{table}[ht]
\centering
\caption{Diagnostic analyses comparing conventional DA methods against FULL and PMS-FTP.}
\label{tab:additional-analysis}
\resizebox{0.9\linewidth}{!}{%
\begin{tabular}{lccc}
\toprule
\textbf{Method} & \textbf{Gradient Conflict (Cosine Similarity)} & \textbf{Avg. Update Norm} & \textbf{Perplexity Increase (\%)} \\
\midrule
FULL & 0.43 & 2.15 & +5.6 \\
MDAN & 0.12 & 1.48 & +11.5 \\
M$^3$SDA & 0.15 & 1.32 & +9.7 \\
GMDI & 0.18 & 1.27 & +8.3 \\
\midrule
\textbf{PMS-FTP (Ours)} & \textbf{0.57} & \textbf{1.86} & \textbf{+3.2} \\
\bottomrule
\end{tabular}}
\end{table}

Our analyses reveal the following insights:

1) \textbf{Gradient conflicts.} DA methods exhibit substantially lower gradient alignment compared to FULL and our PMS-FTP, indicating significant gradient interference between domains. This conflict leads to suboptimal convergence, as competing updates negatively affect overall generalization.

2) \textbf{Parameter update magnitudes.} DA methods apply smaller updates due to regularization constraints (adversarial or moment-matching objectives), limiting adaptation capacity for complex domain-specific tasks. In contrast, our PMS-FTP method achieves balanced updates via strategic domain partitioning and parameter-efficient adapters.

3) \textbf{Catastrophic forgetting.} Conventional DA methods significantly increase perplexity relative to FULL, indicating stronger forgetting of pretrained representations. Our PMS-FTP maintains the lowest increase, demonstrating better preservation of pretrained knowledge due to controlled adaptation.

In summary, DA methods lag behind FULL due to severe gradient conflicts, overly conservative parameter updates caused by adversarial/matching regularization, and more pronounced forgetting of pretrained knowledge. Our PMS-FTP framework effectively addresses these challenges through synergy-aware partitioning, balanced updates, and controlled adaptation, resulting in superior multi-domain performance.


\subsection{Affinity metric alternatives}
\label{app:affinity-metric}
We replace the default affinity used in the partition objective \(\mathcal{G}\) with several variants on the same 4-domain slice (LLaMA2-7B).  
Table~\ref{tab:affinity-metric} shows that our lightweight JS+vocab/embedding signal consistently outperforms single-source metrics or a random split.  
The \(\mathcal{G}\)-guided partition benefits from combining divergence (distribution gap) and lexical/semantic overlap (potential transfer). Using either component alone underestimates complementary effects, yielding weaker partitions and lower task scores.

\begin{table}[ht]
\centering
\caption{Alternative affinity metrics for \(\mathcal{G}\)-guided partitioning (LLaMA2-7B, 4-domain slice).}
\label{tab:affinity-metric}
{\small\setlength{\tabcolsep}{6pt}\renewcommand{\arraystretch}{1.15}
\begin{tabularx}{\linewidth}{lYY}
\toprule
\textbf{Metric for \(\mathcal{G}\)} & \textbf{Q\&A (F1)} & \textbf{NSum (ROUGE-L)}\\
\midrule
Random split                  & 68.1 & 39.0 \\
JS divergence only            & 69.3 & 40.0 \\
Embedding cosine only         & 69.6 & 40.3 \\
\textbf{JS + Vocab/Embed (ours)} & \textbf{70.5} & \textbf{40.9} \\
\bottomrule
\end{tabularx}}
\end{table}

\subsection{Robustness to gradient-based variants and stochastic perturbations}
\label{app:grad-noise-robust}
We (i) sweep \(\lambda\) to stress-test synergy weighting, (ii) add a gradient-similarity component (cosine of per-domain gradients), and (iii) inject Gaussian noise into the heuristic affinities.  
Table~\ref{tab:grad-noise} shows all variants remain within \(\le 0.3\) points of the default in Table~\ref{tab:affinity-metric}.  
he partition is flat around the optimum: gradient-augmented scores add computational cost but negligible gains; moderate \(\lambda\) values preserve the best trade-off between synergy and discrepancy.

\begin{table}[ht]
\centering
\caption{Partition robustness (LLaMA2-7B, 4-domain slice).}
\label{tab:grad-noise}
{\small\setlength{\tabcolsep}{6pt}\renewcommand{\arraystretch}{1.15}
\begin{tabularx}{\linewidth}{lYY}
\toprule
\textbf{Variant} & \textbf{Q\&A (F1)} & \textbf{NSum (ROUGE-L)} \\
\midrule
\(\lambda=0\) (no synergy)                        & 70.3 & 40.8 \\
\(\lambda=1.0\) (synergy-only)                    & 70.2 & 40.6 \\
Gradient-mix (0.7 heuristic + 0.3 \(\nabla\)cos)  & 70.4 & 40.8 \\
Heuristic + Gaussian noise (\(\sigma=0.05\))      & 70.2 & 40.7 \\
\textbf{Default (Table~\ref{tab:affinity-metric})} & \textbf{70.5} & \textbf{40.9} \\
\bottomrule
\end{tabularx}}
\end{table}

\subsection{Scalability to many domains}
\label{app:scalability-k}
We synthetically vary the number of domains \(k\) and measure CPU partition overheads and peak GPU memory with 4-bit LoRA.  
Table~\ref{tab:large-k} indicates sub-second CPU time and practical GPU usage up to \(k{=}50\).  
Partitioning is CPU-side and negligible relative to PEFT training; memory remains dominated by standard SFT/PEFT, confirming practicality at double-digit \(k\).

\begin{table}[ht]
\centering
\caption{Large-\(k\) partition costs and peak GPU memory (synthetic up to \(k{=}50\); A100).}
\label{tab:large-k}
{\small\setlength{\tabcolsep}{6pt}\renewcommand{\arraystretch}{1.15}
\begin{tabularx}{\linewidth}{lYYY}
\toprule
\textbf{\(k\)} & \textbf{Affinity build (time / RAM)} & \textbf{Clustering time} & \textbf{Peak GPU (4-bit LoRA)} \\
\midrule
20 & 0.47 s / 180 MB & 0.14 s & 19 GB \\
35 & 2.10 s / 620 MB & 0.52 s & 23 GB \\
50 & 3.20 s / 950 MB & 0.90 s & 26 GB \\
\bottomrule
\end{tabularx}}
\end{table}

\subsection{Scaling to twelve domains (real mixture)}
\label{app:twelve-domains}
We combine Wiki-10 (topic classification) and Multi-News (summarization), deduplicated to 12 domains, and keep the same hyper-parameters as the main study.  
Table~\ref{tab:12dom} shows gains over all-in-one SFT and over a random 2-stage split while keeping memory low.  
The synergy-aware split generalizes beyond four domains to a heterogeneous, double-digit regime with consistent improvements.

\begin{table}[ht]
\centering
\caption{Twelve-domain mixture (LLaMA2-7B + LoRA).}
\label{tab:12dom}
{\small\setlength{\tabcolsep}{6pt}\renewcommand{\arraystretch}{1.15}
\begin{tabularx}{\linewidth}{lYYYY}
\toprule
\textbf{Split strategy} & \textbf{Avg.\ ACC (Wiki-10)} & \textbf{ROUGE-L (Multi-News)} & \textbf{Peak GPU (GB)} & \textbf{Partition time (s)} \\
\midrule
All-in-one SFT    & 83.1 & 37.2 & 27.3 & n/a \\
Random 2-stage    & 83.7 & 37.5 & 18.6 & 0.6 \\
\textbf{PMS-FTP (ours)} & \textbf{84.0} & \textbf{38.1} & 18.7 & 0.7 \\
\bottomrule
\end{tabularx}}
\end{table}

\subsection{Inference footprint after LoRA merging}
\label{app:merge-memory}
After each stage we merge the finished LoRA into the frozen backbone, so only one 4-bit adapter is carried at inference.  
Table~\ref{tab:merge-memory} shows equal-or-lower memory than a single-adapter baseline.  
Together with the accuracy gains in Table~\ref{tab:results-main}, merging achieves a strictly better accuracy-memory trade-off than training/keeping multiple adapters.

\begin{table}[ht]
\centering
\caption{Measured inference memory (LLaMA2-7B, \(k{=}4\), A100).}
\label{tab:merge-memory}
{\small\setlength{\tabcolsep}{6pt}\renewcommand{\arraystretch}{1.15}
\begin{tabularx}{\linewidth}{lYYY}
\toprule
\textbf{Precision} & \textbf{Tag-LLM (1 LoRA)} & \textbf{PMS-FTP (merged)} & \(\boldsymbol{\Delta}\) \\
\midrule
FP16 & 29.4 GB & 28.7 GB & \(-2.4\%\) \\
INT8 & 19.1 GB & 18.6 GB & \(-2.6\%\) \\
4-bit & 17.2 GB & 16.8 GB & \(-0.4\) GB \\
\bottomrule
\end{tabularx}}
\end{table}

\subsection{Empirical validity of the \texorpdfstring{\(\mathcal{G}\)}{G} objective}
\label{app:g-corr}
We sample 20 random partitions, compute \(\mathcal{G}\), and measure worst-domain dev loss.  
Table~\ref{tab:g-corr} reports Pearson \(\rho=-0.81\) (\(p<0.01\)), i.e., higher \(\mathcal{G}\) predicts lower worst-domain error.  
This supports the practical usefulness of our bound-driven objective: \(\mathcal{G}\) values correlate strongly with the metric we aim to improve.

\begin{table}[ht]
\centering
\caption{Correlation between \(\mathcal{G}\) and worst-domain dev loss (LLaMA2-7B, 20 random partitions).}
\label{tab:g-corr}
{\small\setlength{\tabcolsep}{6pt}\renewcommand{\arraystretch}{1.15}
\begin{tabularx}{\linewidth}{lYY}
\toprule
\textbf{Statistic} & \(\boldsymbol{\mathcal{G}}\) & \textbf{Worst-Dev Loss} \\
\midrule
Mean & 0.432 & 1.72 \\
Std  & 0.057 & 0.19 \\
Min  & 0.318 & 1.38 \\
Max  & 0.522 & 2.11 \\
\midrule
\multicolumn{3}{c}{\textbf{Pearson } \(\boldsymbol{\rho=-0.81}\) \(\;(p<0.01,\;R^2\!\approx\!0.65)\)} \\
\bottomrule
\end{tabularx}}
\end{table}

\subsection{Reasoning benchmarks and reweighting baselines}
\label{app:reasoning-reweight}
We extend evaluation to HellaSwag, MMLU-STEM, ARC-easy, SciQ, and GSM8K, and add reweighting-based MTL baselines (iMTL, FAMO, ExcessMTL) under identical PEFT budgets.  
Tables~\ref{tab:reason-7b}--\ref{tab:reason-13b} show PMS-FTP achieves the highest average per backbone.  
Synergy-aware partitioning is not limited to \(\{\)NSum, Sent, Q\&A, Topic\(\}\): it transfers to reasoning-heavy suites and remains competitive against strong MTL optimizers.

\begin{table}[ht]
\centering
\caption{Reasoning tasks with 7B backbone (identical PEFT budgets).}
\label{tab:reason-7b}
{\small\setlength{\tabcolsep}{6pt}\renewcommand{\arraystretch}{1.15}
\begin{tabularx}{\linewidth}{lYYYYYY}
\toprule
\textbf{Method} & \textbf{Hella\-swag (Acc)} & \textbf{MMLU-STEM (Acc)} & \textbf{ARC-easy (Acc)} & \textbf{SciQ (Acc)} & \textbf{GSM8K (Pass@1)} & \textbf{Avg.} \\
\midrule
FULL        & 73.4 & 39.7 & 78.5 & 92.6 & 18.1 & 60.5 \\
LoRA        & 73.1 & 39.3 & 77.9 & 92.4 & 17.5 & 60.0 \\
Tag-LLM     & 74.8 & 41.1 & 79.6 & 93.3 & 19.3 & 61.6 \\
iMTL        & 74.6 & 40.6 & 79.3 & 93.0 & 18.6 & 61.2 \\
FAMO        & 73.7 & 41.3 & 78.6 & 92.6 & 18.7 & 60.8 \\
ExcessMTL   & 74.2 & 40.5 & 79.7 & 92.1 & 17.5 & 60.8 \\
\textbf{PMS-FTP (ours)} & \textbf{75.6} & \textbf{42.1} & \textbf{80.6} & \textbf{93.8} & \textbf{19.9} & \textbf{62.4} \\
\bottomrule
\end{tabularx}}
\end{table}

\begin{table}[ht]
\centering
\caption{Reasoning tasks with 13B backbone (identical PEFT budgets).}
\label{tab:reason-13b}
{\small\setlength{\tabcolsep}{6pt}\renewcommand{\arraystretch}{1.15}
\begin{tabularx}{\linewidth}{lYYYYYY}
\toprule
\textbf{Method} & \textbf{Hella\-swag (Acc)} & \textbf{MMLU-STEM (Acc)} & \textbf{ARC-easy (Acc)} & \textbf{SciQ (Acc)} & \textbf{GSM8K (Pass@1)} & \textbf{Avg.} \\
\midrule
FULL        & 77.5 & 46.3 & 83.7 & 94.8 & 24.5 & 65.4 \\
LoRA        & 77.1 & 45.9 & 83.2 & 94.6 & 23.8 & 64.9 \\
Tag-LLM     & 79.0 & 47.3 & 84.7 & 95.4 & 25.1 & 66.3 \\
iMTL        & 77.4 & 47.2 & 83.3 & 95.1 & 24.8 & 65.6 \\
FAMO        & 77.7 & 47.6 & 84.5 & 95.2 & 24.2 & 65.8 \\
ExcessMTL   & 78.1 & 47.0 & 83.1 & 95.0 & 23.8 & 65.4 \\
\textbf{PMS-FTP (ours)} & \textbf{79.9} & \textbf{49.0} & \textbf{85.7} & \textbf{95.8} & \textbf{26.8} & \textbf{67.4} \\
\bottomrule
\end{tabularx}}
\end{table}

\subsection{Incremental addition of a new domain}
\label{app:incremental}
After training on the original four domains, we add Legal-QA as a new stage and freeze prior adapters.  
Table~\ref{tab:incremental} shows negligible forgetting on old domains and a gain over single-domain LoRA on the new domain.  
Disjoint, frozen adapters make PMS-FTP naturally amenable to one-shot domain extension without replay.

\begin{table}[ht]
\centering
\caption{One-shot incremental stage (add Legal-QA).}
\label{tab:incremental}
{\small\setlength{\tabcolsep}{6pt}\renewcommand{\arraystretch}{1.15}
\begin{tabularx}{\linewidth}{lYY}
\toprule
\textbf{Model} & \textbf{Avg.\ score on original 4 domains} & \textbf{Legal-QA (EM)} \\
\midrule
Before add-on           & 65.5 & -- \\
After add-on (PMS-FTP)  & 65.4 & 68.2 \\
Single-domain LoRA      & n/a  & 67.6 \\
\bottomrule
\end{tabularx}}
\end{table}

\subsection{Stability of domain centroids}
\label{app:centroid-stability}
We (i) bootstrap mean SBERT embeddings to gauge centroid noise, and (ii) replace each single centroid by a \(K{=}3\) weighted barycenter.  
Table~\ref{tab:bootstrap-centroid} shows bootstrap deviations are \(<3\%\) of the smallest inter-domain distance; Table~\ref{tab:multi-centroid} shows \(K{=}3\) centroids change final metrics by \(\le 0.1\) pp.  
A single mean embedding is a sufficiently stable domain signature for \(\mathcal{G}\)-guided clustering.

\begin{table}[ht]
\centering
\caption{Bootstrap deviation of domain mean embeddings and cross-domain distances.}
\label{tab:bootstrap-centroid}
{\small\setlength{\tabcolsep}{6pt}\renewcommand{\arraystretch}{1.15}
\begin{tabularx}{\linewidth}{lYYY}
\toprule
\textbf{Domain} & \textbf{\(N\)} & \textbf{95\% CI of \(\|\Delta\mu\|_2\)} & \textbf{Cross-domain min dist} \\
\midrule
NSum  & 12{,}000 & [0.038, 0.065] & 1.92 \\
Q\&A  & 15{,}000 & [0.031, 0.060] & 2.04 \\
Sent  & 10{,}500 & [0.042, 0.072] & 1.87 \\
Topic & 11{,}200 & [0.040, 0.068] & 1.99 \\
\bottomrule
\end{tabularx}}
\end{table}

\begin{table}[ht]
\centering
\caption{Single centroid vs.\ \(K{=}3\) weighted barycenters per domain.}
\label{tab:multi-centroid}
{\small\setlength{\tabcolsep}{6pt}\renewcommand{\arraystretch}{1.15}
\begin{tabularx}{\linewidth}{lYYYYY}
\toprule
\textbf{Representation} & \textbf{ROUGE-L (NSum)} & \textbf{EM (Q\&A)} & \textbf{Sent (ACC)} & \textbf{Topic (ACC)} & \(\boldsymbol{\Delta}\) vs.\ 1-centroid \\
\midrule
1 centroid (paper)     & 43.4 & 67.2 & 90.2 & 88.0 & -- \\
3 centroids (K=3)      & 43.3 & 67.1 & 90.1 & 87.9 & \(-0.1\) pp \\
\bottomrule
\end{tabularx}}
\end{table}

\section{Proofs}
\subsection{Complete Proof for Theorem \ref{thm:multi-domain-gen}}
\label{appendix:thm-concurrent-full-proof}
Before formally beginning the proof, we first revisit the setting and present a key lemma:

There are $k$ source domains $\{\mathcal{D}_j\}_{j=1}^k$, each domain $\mathcal{D}_j$ with $n_j$ samples, total $n=\sum_{j=1}^k n_j$. We let $\alpha_j=\tfrac{n_j}{n}$ or any other nonnegative weighting such that $\sum_{j=1}^k \alpha_j=1$. Our LLM-based predictor $f_{\theta,\{\phi_j\}}$ is constrained so that 
\begin{equation}
\|\theta-\theta^*\|_2 \;\le\;\rho_\theta,
\quad
\|\phi_j\|_2 \;\le\;\rho_\phi,
\end{equation}
for each $j$. By Assumption~\ref{assump:lipschitz}, the model output is $L$-Lipschitz w.r.t.\ predictions, and the difference in outputs for different parameters is bounded by $B(\|\theta-\theta'\| + \sum_j\|\phi_j-\phi_j'\|)$. Thus the entire class of such $(\theta,\{\phi_j\})$ belongs to a low-capacity function family $\mathcal{F}$.

\begin{lemma}
\label{lem:rademacher}
Let $\mathcal{F}$ be the class of predictors with backbone/adapters bounded as above.
Then for any $\delta\in(0,1)$, with probability at least $1-\delta$ over all
$n$ samples,
\begin{equation}
\bigl|
\mathcal{L}^{\boldsymbol{\alpha}}(f)-\widehat{\mathcal{L}}^{\boldsymbol{\alpha}}(f)
\bigr|
\;\;\le\;\;
2LB\bigl(\rho_\theta+\sum_{j=1}^{k}\alpha_j\rho_\phi\bigr)
\;+\;
\sqrt{\tfrac{\ln(2/\delta)}{2n}}
\qquad
\forall\,f\in\mathcal{F},
\end{equation}
where $\widehat{\mathcal{L}}^{\boldsymbol{\alpha}}(f)=\sum_j\alpha_j\widehat{\mathcal{L}}_{\mathcal{D}_j}(f)$.
\end{lemma}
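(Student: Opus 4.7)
The plan is to prove this as a standard Rademacher-complexity uniform convergence result, adapted to the weighted multi-domain risk. First, I would rewrite $\mathcal{L}^{\boldsymbol{\alpha}}(f) - \widehat{\mathcal{L}}^{\boldsymbol{\alpha}}(f)$ as $\sum_j \alpha_j (\mathbb{E}_{(x,y)\sim \mathcal{D}_j}[\ell(f(x),y)] - \frac{1}{n_j}\sum_{i=1}^{n_j}\ell(f(x_i^j),y_i^j))$ and note that changing any single sample shifts this quantity by at most $\alpha_j \cdot \frac{\ell_{\max}}{n_j}$; with $\alpha_j = n_j/n$ this becomes $\ell_{\max}/n$ per sample, so McDiarmid's bounded-differences inequality applied to $\Phi(\mathbf{S}) := \sup_{f\in\mathcal{F}} |\mathcal{L}^{\boldsymbol{\alpha}}(f) - \widehat{\mathcal{L}}^{\boldsymbol{\alpha}}(f)|$ yields concentration of the form $\sqrt{\ln(2/\delta)/(2n)}$, accounting for the factor of $2$ in the log via a standard two-sided union bound.

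Next I would bound the expectation $\mathbb{E}[\Phi]$ by the Rademacher complexity of the weighted loss class using symmetrization. Introducing independent Rademacher variables $\sigma_i^j$, the usual symmetrization inequality gives $\mathbb{E}[\Phi] \le 2\,\mathbb{E}\bigl[\sup_{f\in\mathcal{F}} \sum_j \frac{\alpha_j}{n_j}\sum_{i=1}^{n_j}\sigma_i^j\,\ell(f(x_i^j),y_i^j)\bigr]$. Since $\ell$ is $L$-Lipschitz in its first argument (Assumption~\ref{assump:lipschitz}), the Talagrand contraction lemma peels off the loss at the cost of a factor $L$, leaving the Rademacher complexity of $\{(x,y)\mapsto f(x)\,:\,f\in\mathcal{F}\}$.

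Now I would bound the remaining complexity using the Lipschitz-in-parameters bound $\|f_{\theta,\{\phi_j\}}(x) - f_{\theta',\{\phi'_j\}}(x)\| \le B(\|\theta-\theta'\|_2 + \sum_j\|\phi_j-\phi'_j\|_2)$. The key step is that the function class is parameterized by $(\Delta\theta,\phi_1,\dots,\phi_k)$ lying in the product of balls of radii $\rho_\theta,\rho_\phi,\dots,\rho_\phi$. By the usual linearization (or a covering-number / Dudley-entropy argument), the Rademacher complexity of such a $B$-Lipschitz parametric class is controlled by $B$ times a weighted sum of the radii. Because the parameter $\phi_j$ only influences the prediction on domain $j$ (it is the domain-$j$ adapter), its contribution to the weighted Rademacher sum carries the factor $\alpha_j$, giving a total bound of the form $B(\rho_\theta + \sum_j \alpha_j\rho_\phi)$. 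Combined with the contraction factor $L$ and the symmetrization factor of $2$, this produces the $2LB(\rho_\theta + \sum_j\alpha_j\rho_\phi)$ term.

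The main obstacle is the third step: cleanly justifying why the adapter norms enter with the weights $\alpha_j$ rather than in an unweighted sum. The cleanest way is to note that $\phi_j$ appears only in the $j$-th term $\alpha_j\widehat{\mathcal{L}}_{\mathcal{D}_j}$ of the weighted empirical risk, so a perturbation $\phi_j \to \phi_j'$ changes the weighted sum by at most $\alpha_j L B \|\phi_j - \phi'_j\|_2$, whereas a perturbation of the shared backbone $\Delta\theta$ affects every term and therefore enters with weight $\sum_j \alpha_j = 1$. Feeding this Lipschitz constant into a single covering-number bound over the product of norm balls gives exactly the advertised dependence; combining with the McDiarmid concentration term completes the proof.
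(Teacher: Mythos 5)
Your proposal follows essentially the same route as the paper's proof: a Rademacher-complexity uniform-convergence argument in which the $L$-Lipschitz loss is peeled off by contraction, the remaining complexity is controlled through the parameter-Lipschitz constant $B$ over the product of norm balls $\|\Delta\theta\|_2\le\rho_\theta$, $\|\phi_j\|_2\le\rho_\phi$, and a two-sided concentration step supplies the $\sqrt{\ln(2/\delta)/(2n)}$ term. If anything, your justification of why the adapter radii enter with the weights $\alpha_j$ (each $\phi_j$ perturbs only the $\alpha_j$-weighted domain-$j$ term, whereas the shared backbone affects every term with total weight $\sum_j\alpha_j=1$) is more explicit than the paper's one-line substitution of $\sum_j\alpha_j\rho_\phi$ for $k\rho_\phi$, though both arguments share the same looseness in asserting that the Rademacher complexity of a $B$-Lipschitz parametric class is bounded by $B$ times the radii without dimension or $1/\sqrt{n}$ factors.
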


\begin{proof}
Because the loss is $L$-Lipschitz and the model satisfies the output-difference
bound from Assumption~\ref{assump:lipschitz},
each $f\in\mathcal{F}$ is $LB(\rho_\theta+\sum_j\rho_\phi)$-Lipschitz in its
parameters relative to $\ell_2$.
Let $\mathcal{R}_n(\mathcal{F})$ be the empirical Rademacher complexity over the
pooled sample of size~$n$.  Standard contraction (e.g.\ \citealp{mohri2018foundations}) yields
\begin{equation}
\mathcal{R}_n(\mathcal{F})
\;\le\;
LB\,(\rho_\theta+k\rho_\phi)\,\frac{1}{\sqrt{n}}.
\end{equation}
Replacing $k\rho_\phi$ by $\sum_j\alpha_j\rho_\phi$ (because losses are
weighted by~$\alpha_j$) strengthens the constant.  Applying the usual
Rademacher tail bound with a union-bound over $\delta/2$ produces the stated
inequality.
\end{proof}

\begin{proof}
Here is the proof of Theorem \ref{thm:multi-domain-gen}.
\paragraph{Uniform convergence (empirical $\rightarrow$ true risk)}
\label{proofstep:UC}
Define
\(
\Gamma\bigl(\rho_\theta,\rho_\phi,\boldsymbol{\alpha},k\bigr)
:=2LB\bigl(\rho_\theta+\sum_j\alpha_j\rho_\phi\bigr).
\)
Lemma~\ref{lem:rademacher} gives
\begin{equation}
\label{eq:UC-final}
\mathcal{L}^{\boldsymbol{\alpha}}(f)
\;\le\;
\widehat{\mathcal{L}}^{\boldsymbol{\alpha}}(f)
\;+\;
\Gamma(\rho_\theta,\rho_\phi,\boldsymbol{\alpha},k)
\;+\;
\sqrt{\tfrac{\ln(2/\delta)}{2n}}.
\end{equation}

\paragraph{Domain-discrepancy correction}

Blitzer\,\emph{et al.}\ \citep{blitzer2008domain} show that for any hypothesis
$h$ and any two distributions $\mathcal{P},\mathcal{Q}$,
\(
|\mathcal{L}_{\mathcal{P}}(h)-\mathcal{L}_{\mathcal{Q}}(h)|
\le
d_{\mathcal{H}\!\Delta\!\mathcal{H}}(\mathcal{P},\mathcal{Q}).
\)
Summing over all pairs and using triangle inequality,
\begin{equation}
\label{eq:disc}
\mathcal{L}^{\boldsymbol{\alpha}}(f)
\;\le\;
\sum_{j=1}^{k}\alpha_j\mathcal{L}_{\mathcal{D}_j^{\!\!*}}(f)
\;+\;
\frac{1}{k}\sum_{i,j=1}^{k}d(\mathcal{D}_i,\mathcal{D}_j),
\end{equation}
where $\mathcal{D}_j^{\!\!*}$ denotes drawing \emph{as if} every example came
from a single mixture domain-hence its empirical risk is exactly
$\widehat{\mathcal{L}}^{\boldsymbol{\alpha}}(f)$, and the additive discrepancy
penalty is weighted by $\beta:=1$ (absorbing the Lipschitz loss factor into the
definition of $d$).  Restoring the constant gives the $\beta$ appearing in the
theorem.

\paragraph{Combine bounds}
Insert \eqref{eq:UC-final} into \eqref{eq:disc}:
\[
\mathcal{L}^{\boldsymbol{\alpha}}(f)
\;\le\;
\widehat{\mathcal{L}}^{\boldsymbol{\alpha}}(f)
\;+\;
\Gamma(\rho_\theta,\rho_\phi,\boldsymbol{\alpha},k)
\;+\;
\frac{\beta}{k}\sum_{i,j=1}^{k}d(\mathcal{D}_i,\mathcal{D}_j)
\;+\;
\sqrt{\tfrac{\ln(2/\delta)}{2n}}.
\]
Replacing $\sqrt{\ln(2/\delta)/(2n)}$ by the big-$O(\sqrt{\ln(1/\delta)/n})$
notation and recalling $\widehat{\mathcal{L}}^{\boldsymbol{\alpha}}(f)
=\sum_j\alpha_j\widehat{\mathcal{L}}_{\mathcal{D}_j}(f)$
yields exactly inequality~\eqref{eq:multi-domain-bound-append}, proving
Theorem~\ref{thm:multi-domain-gen}.
\end{proof}

\subsection{Complete Proof for Theorem \ref{thm:advanced-partition}}
~\label{app:partition-proof}
Here is the proof of Theorem \ref{thm:advanced-partition}.
\begin{proof}
For any fixed stage $t$ training on domains $S_t$,  
Theorem~\ref{thm:multi-domain-gen} with weights
$\alpha_{j}^{t}:=\tfrac{n_j}{\sum_{i\in S_t}n_i}$ gives
\begin{align}
\sum_{j\in S_t}\alpha_{j}^{t}\,
      \mathcal{L}_{\mathcal{D}_j}(f^{t})
&\le
\underbrace{\sum_{j\in S_t}\alpha_{j}^{t}\,
      \widehat{\mathcal{L}}_{\mathcal{D}_j}(f^{t})}_{\le1}
+\;2LB\!\bigl(\rho_\theta+\rho_\phi\bigr)
+\;\beta\,\!\!\!\sum_{\substack{i,j\in S_t\\ i<j}}
      d(\mathcal{D}_{i},\mathcal{D}_{j})
+\;O\!\bigl(\sqrt{\tfrac{\ln(1/\delta)}{\sum_{j\in S_t}n_j}}\bigr).
\label{eq:stage-bound}
\end{align}

Then we inject synergy and explicit capacity weight.
Define\;
$\mathrm{Cap}(S_t):=\mu_\theta\|\Delta\theta^{t}\|_{2}^{2}
                      +\mu_\phi\sum_{j\in S_t}\|\phi^{t}_{j}\|_{2}^{2}$.
Because $\|\Delta\theta^{t}\|\!\le\!\rho_\theta$ and
$\|\phi_{j}^{t}\|\!\le\!\rho_\phi$, we upper bound
$2LB(\rho_\theta+\rho_\phi)$ by~$\mathrm{Cap}(S_t)$ after tuning
$\mu_\theta,\mu_\phi$.  
Subtract and add
$\lambda\!\sum_{i<j}s(\mathcal{D}_{i},\mathcal{D}_{j})$
to~\eqref{eq:stage-bound} to obtain
\begin{equation}
\label{eq:stage-risk-vs-G}
\sum_{j\in S_t}\alpha_{j}^{t}\,\mathcal{L}_{\mathcal{D}_j}(f^{t})
\;\le\;
1
-\Bigl[
      -\!\!\!\sum_{i<j\in S_t}\!\!
       \bigl(d-\lambda s\bigr)
      -\mathrm{Cap}(S_t)
 \Bigr]
\;+\;O\!\bigl(\sqrt{\tfrac{\ln(1/\delta)}{N}}\bigr).
\end{equation}

Let
$
\mathcal{R}_{\max}(S_1,\ldots,S_M)
:=\max_{t}{\textstyle\sum_{j\in S_t}\alpha_{j}^{t}\,
           \mathcal{L}_{\mathcal{D}_j}(f^{t})}.
$
Taking the maximum of \eqref{eq:stage-risk-vs-G} over $t$ gives
\begin{equation}
\mathcal{R}_{\max}(S_1,\ldots,S_M)
\;\le\;
1-\mathcal{G}(S_1,\ldots,S_M)
\;+\;O\!\bigl(\sqrt{\tfrac{\ln(1/\delta)}{N}}\bigr),
\end{equation}
because the bracketed term is exactly the $t$-th summand of
$\,\mathcal{G}$ in~\eqref{eq:G-def}.
Define
\(
\mathcal{B}(u):=[1-u]_{+}.
\)
Then
\(
\mathcal{R}_{\max}\!\bigl(S_1,\ldots,S_M\bigr)
 \le\mathcal{B}(\mathcal{G}(S_1,\ldots,S_M))
      +O(\sqrt{{\ln(1/\delta)}/{N}}).
\)

Because $\mathcal{B}$ is \emph{strictly decreasing} on $(-\infty,1]$,
maximising $\mathcal{G}$ minimises the bound.
Hence the partition $(S_1^{*},\ldots,S_M^{*})$---the maximiser of
$\mathcal{G}$---realises the smallest upper-bound, yielding
\eqref{eq:multi-stage-bound}.  Any other split attains a weaker bound,
completing the proof.
\end{proof}

\end{document}